\newtheorem{theorem}{Theorem}
\newtheorem{lemma}[theorem]{Lemma}
\newtheorem{corollary}[theorem]{Corollary}
\newcommand{\ptime}[3]{t^{^{(#1)}}_{_{#2,#3}}}
\begin{document}

\title{A framework for synchronizing a team of aerial robots in communication-limited environments\thanks{This paper is based in part on a conference paper by
the authors (\cite{dbanez2015icra}).}}

\author{J.M. D\'iaz-B\'a\~nez\thanks{Department of Applied Mathematics II, University of Seville, Spain.}\and 
L.E. Caraballo
\thanks{Department of Applied Mathematics II, University of Seville, Spain.} \and
M.A. Lopez
\thanks{Department of Computer Science, University of Denver, USA.} \and
S. Bereg
\thanks{Department of Computer Science, University of Texas at Dallas, USA.} \and
I. Maza
\thanks{Department of Systems Engineering and Automatic Control, University of Seville, Spain.} \and
A. Ollero
\thanks{Department of Systems Engineering and Automatic Control, University of Seville, Spain.}
}

\maketitle

\begin{abstract}
This paper  addresses a synchronization problem that arises when a team of aerial robots (ARs) need to communicate while performing assigned tasks in a cooperative scenario. Each robot has a limited communication range and flies within a previously assigned closed trajectory. When two robots are close enough, a communication link may be established, allowing the robots to exchange information. The goal is to schedule the flights such that the entire system can be synchronized for maximum information exchange, that is, every pair of neighbors always visit the feasible communication link at the same time. We propose an algorithm for scheduling a team of robots in this scenario and propose a robust framework in which the synchronization of a large team of robots is assured. The approach allows us to design a fault-tolerant system that can be used for multiple tasks such as surveillance, area exploration, searching for targets in a hazardous environment, and assembly and structure construction, to name a few.
\end{abstract}

{\bf Keywords:} Cooperative system, decentralized robots, synchronization, communication coordination, communication constraints.


\section{Introduction}
Multi-agent systems based on mobile robots  are ideally suited to perform cooperative missions in a cost-efficient manner (\cite{cao1997, bicchi2008}). These missions include monitoring, surveillance, structure assembly, and exploration, to name but a few. The use of a team of mobile robots presents advantages when considering mission execution time due to the parallelization of the tasks, fault tolerance and reduced uncertainty due to the possibility of overlapping information coming from the perception systems of different robots. These solutions are specially convenient for tasks which are too dangerous, or difficult for humans to perform.

The coordination between two or more robots in the team is important for team performance, to prevent collisions between robots occupying the same space, and to maintain proximity in order to perform a task cooperatively (\cite{rizzo2013signal-based, bekris2012safe, zlot2006market-based}).
Applications scenarios include:
 \begin{itemize}
\item Communication between robots exploring or monitoring an area, where the robots should be close enough in order to satisfy underlying communication range constraints.
\item Cooperative perception of the same target to decrease uncertainty (\cite{ollero2005}). Here the robots should be simultaneously in given positions and orientations  to have the target in the field of view of the sensors.
\item  Cooperative actuation. The robots should be coordinated to perform a task cooperatively, such as joint transportation of a load or joint assembly, avoiding waiting times.
  \end{itemize}

 High level communication is required in cooperative systems that perform a task in a decentralized manner. Also, the robustness of a cooperative system depends on the reliable communication between the agents involved. In many scenarios, direct communication among all the agents can not be guaranteed (large workspace with respect to the communication range of the system's members, for instance). In such scenarios, a cooperative task imposes communication constraints in which communication links are  available only when neighboring robots reside within a small communication range (\cite{burgard2005,sheng2006}). This is the situation we consider in this paper.

Consider a team of $n$ robots which are periodically traveling along predetermined closed trajectories while performing an assigned task. Each of the agents needs
to communicate information about its operation to other agents, but the communication interfaces have
a limited range. Hence, when two agents are within communication range, a communication link
is established, and information is exchanged. If two neighboring agents can exchange information periodically, we say that they are ``synchronized''. In this paper we consider the following synchronization problem: given the path geometries of a group of agents, schedule their movement along their trajectories so that every pair of neighboring agents is synchronized.

The synchronization problem arises naturally in missions of surveillance or monitoring (\cite{pasqualetti2012cooperative, acevedo_jint14}), in structure assembly while the robots are loading and placing parts in a structure (\cite{bernard2010}), or even in the exploration process looking for parts to be assembled as performed in the ARCAS project (\verb+http://www.arcas-project.eu+), to name but a few applications. In fact, its eventual solution may find many applications beyond the initial problems posed here.

The solution proposed in this paper ensures maximum information exchange of a heterogeneous team of robots with limited communication range.
Notice that the synchronization problem is even more relevant in aerial robots due to:
 \begin{itemize}
\item Rotorcraft robots (i.e. helicopters or multirotor systems) have very demanding energy requirements that limit the flight endurance. Then, hovering, which is very energy demanding,  waiting for other aerial robots to communicate or to perform cooperatively a task, should be minimized by means of synchronization.
\item The synchronization of fixed wing aircraft imposes more strict constraints when they should meet to interchange information due to the velocity of these aircrafts that may lead to communication losses when using short range communication devices.
  \end{itemize}

Additionally, the short communication range between aerial robots, and unmanned aerial vehicles in general, is interesting from the point of view of security. In fact, this short range communication may avoid communication jamming, which is a significant threat in the practical application of unmanned aerial vehicles (\cite{valavanis2015}). Therefore, in this paper we focus the synchronization problem in teams of aerial robots (ARs), but the results are applicable to general multi-robot systems.

To illustrate some of the issues arising in the synchronization problem consider a situation like that shown in Figure~\ref{fig:simple_pb} (3 robots with possible communication links between every pair of trajectories). Suppose, for simplicity, that the three ARs are flying with constant speed along trajectories with the same length in the same direction (clockwise or counterclockwise). How do we guarantee that each pair of ARs is synchronized, i.e., that each pair of neighbors arrives at the communication link at the same time? Under these conditions, it is easy to see that the three pairs of robots can not be synchronized if the sum of the lengths of the internal paths is less than the length of the total trajectory of a robot.

At first glance, one may think that the problem can be solved by adjusting speeds in order to force all meetings at the communication links or by waiting for a neighbor that has not arrived yet. Unfortunately, this approach is  impractical in many real scenarios when sudden changes in speed are not possible. In any case, both accelerations and time wasted by waiting for a neighbor reduces the performance of the system.

\begin{figure}[ht]
\centering
\includegraphics{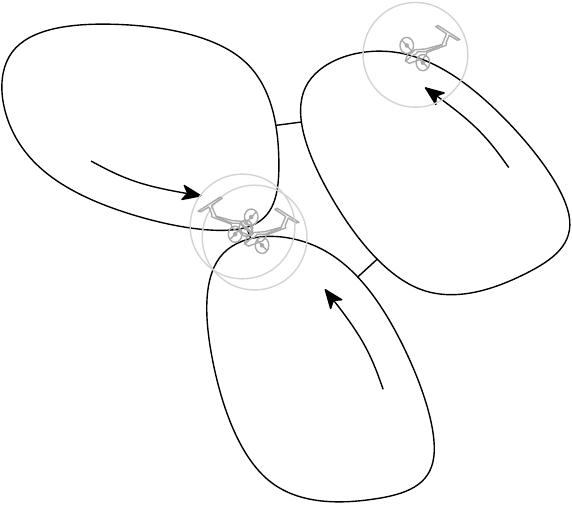}
\caption{Simple scenario of the synchronization problem.}
\label{fig:simple_pb}
\end{figure}

In next section we present previous work related to the coordination and communication of multi-robot systems. In Section~\ref{sec:problem} we define formally the problem. In Section \ref{sec:theoretical} we present conditions to guarantee the synchronization in a simplified model and we describe a scheduling so that the system is synchronized. Section \ref{sec:synchro_alg} presents an algorithm to compute the described scheduling using the previous theoretical results. A generalization of the results from the simplified model to more realistic scenarios is presented in Section \ref{gener}. Section \ref{simul} presents computational results for some cases. In Section \ref{sec:starvation} we introduce a new concept, that of \emph{starvation}, to describe a phenomenon characterized by the permanent loss of synchronization for one or more active ARs and propose some strategies to prevent this problem. Finally in Section \ref{sec:conclusion} we present the conclusions of our study and describe some future research.

\section{Related work}

As far as we know, the problem addressed in this paper has not been solved before. In this section, we give a brief review of related papers.
The distributed and decentralized coordination of a team of aerial robots under communication constraints has been studied in many specific cases, fundamentally in surveillance and monitoring missions. \cite{casbeer2006cooperative} and \cite{alexis2009coordination} present strategies to monitor cooperatively the perimeter of a forest fire using a team of small unmanned aerial vehicles with communication constraints. In this scenario, the team of robots operates on the fire perimeter and there is a single trajectory (that can change dynamically) for all the agents. Therefore, two robots moving in opposite directions always meet, so assigning different directions to the members of the team and changing direction alternately in every meeting event generates the necessary encounters between every pair of neighboring agents.

\cite{franchi2009} present a technique to explore unknown areas using a cooperative team of robots. The strategy is based in a data structure called Sensor-based Random Graph (SRG) where the members of team store environment data, the robots are constantly exploring to add new data to the SRG. When two robots meet at a point (fortuitously) they share information and generate new collision free motion plans to explore the potential unknown regions. Clearly, this technique does not solve our synchronization problem due to the meetings being fortuitous.

\cite{gil2015adaptive} consider the problem of satisfying communication demands in a multi-agent system where several robots cooperate on a task and a fixed subset of the agents act as mobile routers. The goal is to position the team of robotic routers to provide
communication coverage to the remaining client robots. The authors present an adaptive solution to allow for dynamic environments and variable client demands. The main difference with our problem is that we do not require a team of router robots to provide connectivity. In our setting the global communication is guaranteed with intermittent information exchange between neighboring robots while they perform there subtasks.

\cite{seokhoon2007search} propose a strategy called \emph{X Synchronization} (XS) to use autonomous, mobile and cooperative sensor nodes in search missions. The idea is to divide the area to be explored, a rectangle, for simplicity, into $n$ strips, one for each robot, and then to execute a \emph{lane based search}. The communication links are in the common borders between two consecutive strips. The data flow from the leftmost and rightmost agents to the center agent, and the decision control flow from the center to the sides of the team. This technique is interesting but, unfortunately it can not be applied to solve the problem for general workspaces (no strips possible) with irregular trajectories. The communication graph of this technique is just a line and the communication link is established between consecutive agents.

\cite{tardioli2010enforcing} present an integrated algorithm for task allocation and motion planning that keeps the connectivity of the cooperative team of robots. The members of the team have limited communication range but the distance between the agents does not admit disconnections. The tasks are assigned to the robots and the motion paths are computed under this restrictions. This strategy is not feasible for us because in our situation we are assuming large workspaces where it is not possible to establish a persistent communication network using as nodes the operating agents of the system.

In previous work, the authors have implemented and applied early versions of our algorithms for the coordination of aerial robots in some simple scenarios for area exploration and surveillance missions (\cite{acevedo_jint14,caraballo_icuas14}). In this paper, we expand and formally present the theoretical results and algorithms that guarantee the synchronization of a team of mobile robots with limited communication range and show how to use our approach in a fault-tolerant cooperative system. Simulations are  performed using a group of ARs to demonstrate the effectiveness of the proposed strategy.

\section{Problem formulation}\label{sec:problem}

The ingredients of the general problem considered here are the following:
 \begin{itemize}
\item A team of $n$ aerial robots need to share information while cooperating in the execution of a task in a decentralized way.
\item Each vehicle $V_i$ has a fixed communication range $r_i$ and flies with a constant altitude and speed in a specified closed trajectory $P_i$. The routes are disjoint, eliminating concerns about collisions.
\item A communication link (bridge) exists between the trajectories of vehicles $V_i$ and $V_j$ if and only if the minimum distance between the trajectories does not exceed the value $\min\{r_i,r_j\}$. The two vehicles may exchange information at all times when the distance between them is less than or equal to $\min\{r_i,r_j\}$.
\item We refer to the unique segment connecting two trajectories $P_i$ and $P_j$ with length of minimum distance not exceeding $\min\{r_i,r_j\}$ as the \emph{communication link} between the trajectories.
\item  We say that two robots are \emph{neighbors} if there exists a communication link between them; and
 two neighbors are \emph{synchronized} if they visit the communication link at the same time.
 \item A multi-robot system is \emph{synchronized} if  each pair of neighbors is synchronized.
 \item Given two synchronized robots  $V_i$ and $V_j$, the \emph{communication region} of $V_i$ and $V_j$,  is given by the two connected arcs $R_{ij}$ and $R_{ji}$ on $P_i$ and $P_j$ respectively such that when $V_i$ flies on $R_{ij}$ and $V_j$ flies on  $R_{ji}$, the distance between $V_i$ and $V_j$ is kept within $\min\{r_i,r_j\}$.
Two cases arise, depending on whether the two neighbors fly in opposite directions, one clockwise (CW) and the other counter-clockwise (CCW), or in the same direction. Figure~\ref{fig:commreg1} shows the latter case while Figure~\ref{fig:commreg2} illustrates the former.
Notice that the two cases affect differently the length of the arcs $R_{ij}$ and $R_{ji}$. We will later see that they also impact differently the robustness of the system.
\item In a synchronized system the vehicles establish communication when they enter the corresponding communication region without stopping.
   \end{itemize}

\begin{figure}[!h]
\centering
\begin{subfigure}{.5\textwidth}
\centering
\includegraphics[scale=.7 , page=1]{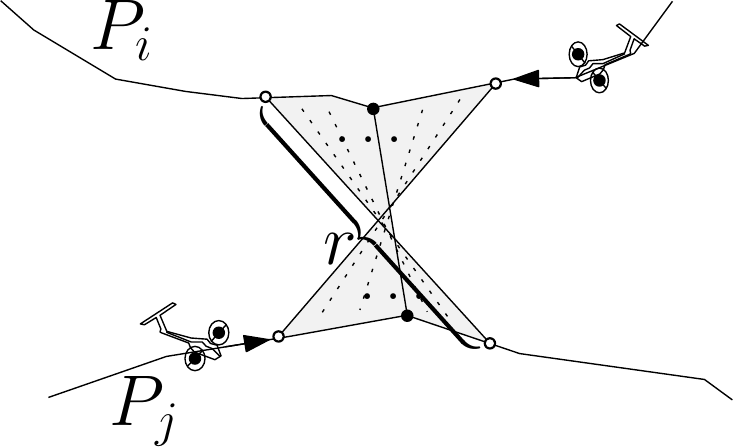}
    \caption{One direction}
    \label{fig:commreg1}
\end{subfigure}%
\begin{subfigure}{.5\textwidth}
\centering
\includegraphics[scale=.7 , page=2]{communication_region.pdf}
    \caption{Two directions}
\label{fig:commreg2}
\end{subfigure}
\caption{Representation of the communication region between two robots $V_i$ and $V_j$ (gray region): (a) ARs flying in the same direction and (b) ARs flying in opposite directions. Notice that both figures use the same communication range ($r=min\{r_i, r_j\}$).}
\end{figure}

\emph{{\bf The Synchronization Problem}: Given a set of $n$ Aerial Robots (ARs), each performing part of a cooperative task within a closed trajectory, and exchanging information with a fixed communication range, schedule the flight trajectories such that the number of synchronized AR pairs is maximized. When this number equals the number of communication links, the system is fully synchronized.}
\vspace{.5cm}

First, observe that under a general
model, even the synchronization between two robots cannot be
guaranteed. Consider, for example, a system consisting of only two ARs, each flying at constant speed with a very small communication range. Then, if the ratio of the two trajectory lengths is not
rational, 
a synchronized flight is not possible. Typically, the methods used in the literature achieve synchronization by changing the speeds of the ARs by small amounts, i.e., they allow for the possibility of one vehicle ``waiting" for the other. Unfortunately, this simple approach is only feasible for two vehicles. For a team of cooperative ARs,
a more delicate theoretical study is required.

The following questions need to be answered in order to implement an efficient and robust multi-robot coordinated system: (i) When can a cooperative multi-robot system be synchronized without changing the robot speeds? and (ii) in case of a robot failure, can the schedules be slightly altered in order to complete the global task in a new synchronized system?

This paper aims to answer the above two questions. Based on theoretical results on a simplified model, we propose an algorithm that is efficient and robust in the face of catastrophic robot failures where the synchronization of a large team of ARs is assured. Although the application scenario highlighted here is the exploration of an unknown environment by a system of ARs,  the proposed concepts and methodologies can be useful in other multi-robot applications.

%
%

\section{Theoretical results in a simplified model}\label{sec:theoretical}

The general methodology we are proposing is to first obtain strong results on a simplified, albeit not entirely practical model, and then to adapt the newly acquired theoretical knowledge to more general and realistic models.
Accordingly, we discuss how to extend the approach for the simple model to others models in Section~\ref{sec:synchro_alg} where we consider, for example, heterogeneous vehicles, non-circular routes, etc.

Let us consider a simplified model for which the basic results can be stated.
In this simplified model all aerial robots move in equal and pairwise disjoint circular trajectories at the same speed with the same communication range. Let $C_1,C_2,\dots,C_n$ be the pairwise disjoint unit circles representing the flight trajectories of the $n$ robots and let $r$ be the communication range  (normalized to be consistent with the circles of unit radius).

In the system two AR's can potentially share information if the distance between their corresponding trajectories is at most $r$ (see Figure~\ref{fig:ARs}). In order to model the ensuing communication constraints we define the \emph{communication graph} of the system {\em with respect to range $r$\/} as a planar graph $G(r)=(V,E(r))$ whose vertices are the circle centers and whose edges connect two centers if their distance is less or equal to $2+r$ (see Figure~\ref{fig:2angles}). We denote by $(i,j)$ the edge that connects the centers of the circles $C_i$ and $C_j$.

In the analysis that follows, it will be convenient to denote the position of an AR by the angle, measured from the positive horizontal axis, that its position makes on the unit circle (see Figure~\ref{fig:ARs}). Fix an arbitrary edge $(i,j)$ of $E(r)$. The \emph{link position of $i$ with respect to $j$}, denoted by $\phi_{ij}$, is the angle at which the AR in $C_i$ is closest to $C_j$. Clearly, if $\phi_{ij}$ is defined, so is $\phi_{ji}$ and $\phi_{ji}=\pi + \phi_{ij}$.  
 Then, two neighbors $i$ and $j$ are \emph{synchronized} if both arrive at the same time to their link positions $\phi_{ij}$ and $\phi_{ji}$, respectively. Notice that the number of synchronized pairs is bounded by $|E(r)|$. Because of our model assumptions, two synchronized neighbors ``meet" each other repeatedly every $2\pi$ units of travel.

\begin{figure}[h]
    \centering
    \includegraphics[width=.4\textwidth, trim = 4cm 5cm 4cm 6cm, clip=true]{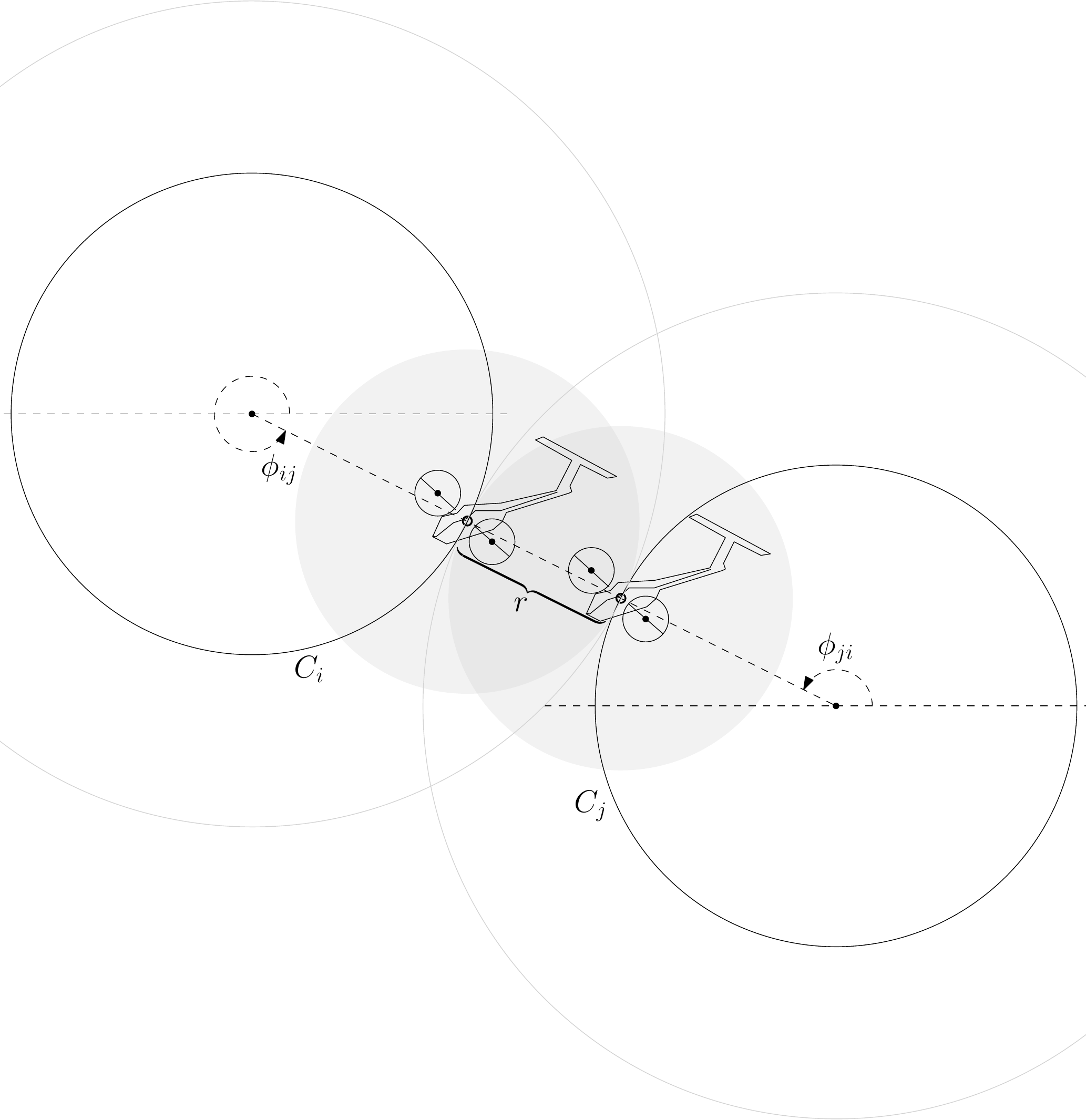}
    \caption{The simple model. Robots $i$ and $j$ can share information if the distance between $C_i$ and $C_j$ is less or equal to $r$. $\phi_{ij}$ (resp. $\phi_{ji}$) is the angle at which $i$ (resp. $j$) is closest to $j$'s trajectory (resp. $i$'s trajectory).}
    \label{fig:ARs}
\end{figure}

\begin{figure}[h]
    \centering
    \includegraphics[width=.45\textwidth]{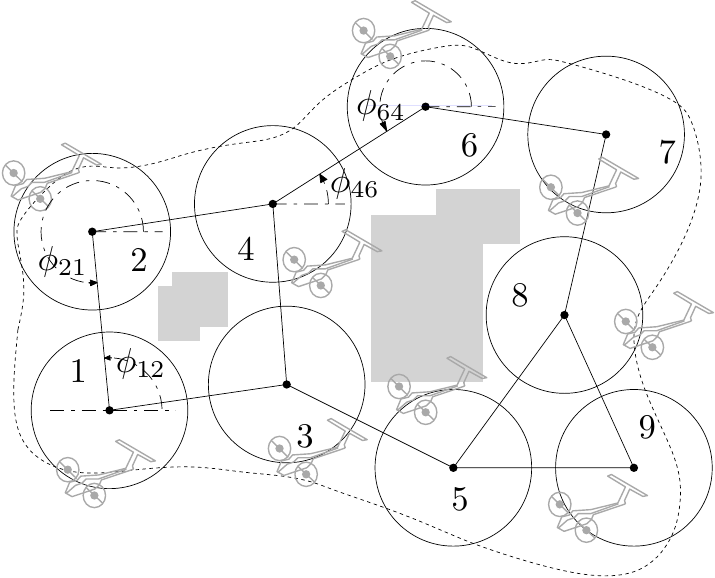}
    \caption{Representation of a set $C_1,C_2,\dots,C_9$ of unit circles and the underlying communication graph.}
    \label{fig:2angles}
\end{figure}

Since the ARs move at constant speed, it suffices to know the starting position and movement direction of an AR in order to compute its position at any time. Thus, we can define a  \emph{flight schedule} of a team of ARs as the set of starting positions and directions of the ARs involved. If the number of synchronized pairs in a flight schedule is $|E(r)|$, we say that the team is \emph{fully synchronized}.

\subsection{Flying in the same direction}
In this subsection we derive conditions that ensure full synchronization of the team when all ARs are flying in the same direction (CW or CCW). Without loss of generality, we assume that the direction of flight is counter-clockwise. We want to compute the starting angles $\alpha_1,\alpha_2,\dots,\alpha_n$ corresponding to the vehicles $V_1,V_2,\dots,V_n$ such that if $(i,j)\in E(r)$ then $V_i$ and $V_j$ reach $\phi_{ij}$ and $\phi_{ji}$, respectively, at the same time. The following result gives us the key to establish sufficient conditions for the existence of a synchronization strategy:

\begin{lemma}\label{2angles}
Let $G(r)=(V,E(r))$ be a communication graph in the simplified model.
If $(i,j)\in E(r)$, $(i,k)\in E(r)$ and $V_i$ is synchronized with $V_j$ and $V_k$ then $\alpha_j=\alpha_k$.
\end{lemma}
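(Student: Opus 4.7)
The plan is to parametrize the motion of each AR by its angular position on its unit circle as a function of time. Since every robot moves at the same constant speed in the CCW direction, I would normalize speed to $1$, so that the position of $V_i$ at time $t$ is simply $\alpha_i + t \pmod{2\pi}$. Thus synchronization is purely an algebraic condition relating starting angles and link angles modulo $2\pi$.

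First I would record the geometric fact already implicit in the setup: for any edge $(i,j)\in E(r)$, the two link positions $\phi_{ij}$ and $\phi_{ji}$ are diametrically opposite in angle because they lie on the straight segment between the centers of $C_i$ and $C_j$, traversed once in each direction. That is, $\phi_{ji}=\phi_{ij}+\pi\pmod{2\pi}$, which the paper already states. Next, I would translate the definition of ``synchronized'' into equations: $V_i$ synchronized with $V_j$ means there exists a time $t_{ij}$ with
\[
\alpha_i+t_{ij}\equiv\phi_{ij}\pmod{2\pi}\qquad\text{and}\qquad \alpha_j+t_{ij}\equiv\phi_{ji}\pmod{2\pi}.
\]
Subtracting the two congruences eliminates $t_{ij}$ and, using $\phi_{ji}-\phi_{ij}\equiv\pi$, yields
\[
\alpha_j-\alpha_i\equiv\pi\pmod{2\pi}.
\]

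Then I would apply exactly the same argument to the edge $(i,k)$ and the synchronization of $V_i$ with $V_k$ to get $\alpha_k-\alpha_i\equiv\pi\pmod{2\pi}$. Subtracting these two relations gives $\alpha_j\equiv\alpha_k\pmod{2\pi}$, and since starting angles are taken in $[0,2\pi)$ this is the equality $\alpha_j=\alpha_k$ claimed by the lemma.

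I do not expect a real obstacle here; the proof is essentially a one-line congruence calculation once the positions are parametrized. The only subtle point worth spelling out is the ``diametric'' relation $\phi_{ji}=\phi_{ij}+\pi$: this holds because the circles have equal radii, so the two mutually closest points on $C_i$ and $C_j$ are collinear with the centers, and the direction from $C_j$'s center toward that meeting point is the reverse of the direction from $C_i$'s center. Everything else is just arithmetic modulo $2\pi$.
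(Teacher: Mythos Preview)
Your proof is correct and is essentially the same argument as the paper's. The paper makes the WLOG reduction $\alpha_i=\phi_{ij}$, $\alpha_j=\phi_{ji}$ and then computes $\alpha_k$ directly, whereas you keep general starting angles and eliminate the meeting time to obtain $\alpha_j-\alpha_i\equiv\pi\equiv\alpha_k-\alpha_i$; both routes hinge on the same identity $\phi_{ji}=\phi_{ij}+\pi$ and amount to the same one-line computation.
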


\begin{proof}
Without loss of generality, assume $\alpha_i=\phi_{ij}$ and $\alpha_j=\phi_{ji}$, i.e., both $V_i$ and $V_j$ start at their link position, allowing them to monitor each other. We now need to compute $\alpha_k$ so that when $V_i$ is at $\phi_{ik}$, $V_k$ is at $\phi_{ki}$ (see Figure~\ref{fig:lemma_kij}). Thus,
\begin{eqnarray*}
  \alpha_k &=& \phi_{ki}-(\phi_{ik}-\phi_{ij}) \\
           &=& (\phi_{ki}-\phi_{ik})+\phi_{ij}= \pi + \phi_{ij} = \phi_{ji} \\
           &=& \alpha_j
\end{eqnarray*}
\begin{figure}[h]
    \centering
    \includegraphics[width=.4\textwidth]{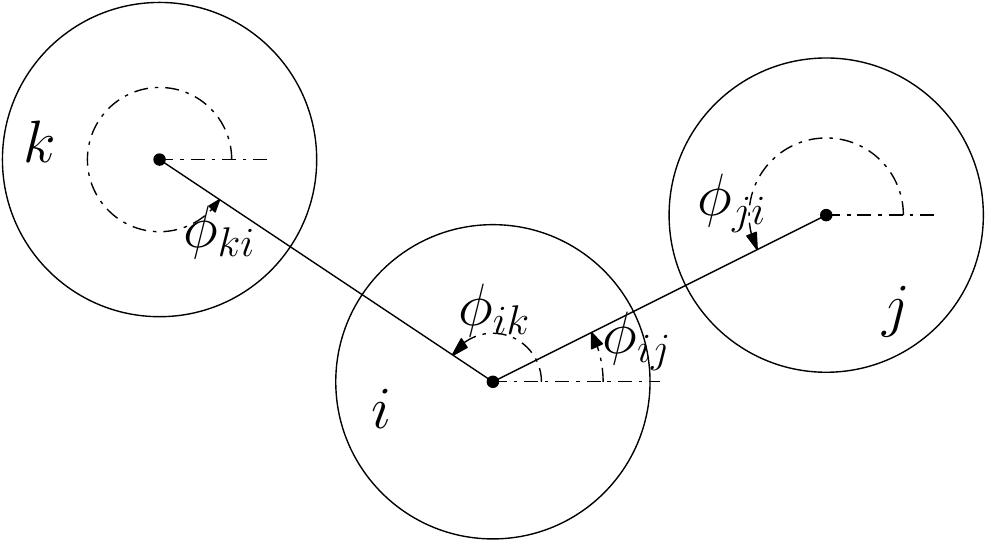}
    \caption{If $V_i$ is synchronized with $V_j$ and $V_k$ then $\alpha_{k}=\alpha_{j}$.}
    \label{fig:lemma_kij}
\end{figure}
\end{proof}

Using Lemma~\ref{2angles} we derive the following corollary:

\begin{corollary}\label{cor:not_oddCycle}
Let $G(r)=(V,E(r))$ be the communication graph in the simplified model.
Cycles of odd length in $G(r)$ cannot be synchronized.
\end{corollary}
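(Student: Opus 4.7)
The plan is to use Lemma~\ref{2angles} to force all starting angles along a hypothetical synchronized odd cycle to coincide, and then to derive a contradiction with the fact that a single synchronized edge must have endpoints whose starting angles differ by $\pi$. This is essentially the classical non-bipartiteness argument.

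First I would extract a complementary observation that is implicit in the definitions and in the proof of Lemma~\ref{2angles}: if $V_i$ and $V_j$ are synchronized neighbors then $\alpha_j \equiv \alpha_i + \pi \pmod{2\pi}$. To see this, recall that $\phi_{ji} = \pi + \phi_{ij}$ by definition, and since both vehicles move at the same constant angular speed the angular offset $\alpha_j - \alpha_i$ is time-invariant; simultaneous arrival at $\phi_{ij}$ and $\phi_{ji}$ therefore pins this offset at $\pi$, in particular at $t=0$.

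Now suppose, for contradiction, that a cycle $i_1 - i_2 - \cdots - i_{2k+1} - i_1$ of odd length in $G(r)$ is fully synchronized. Applying Lemma~\ref{2angles} successively at the even-indexed vertices $i_2, i_4, \ldots, i_{2k}$, each of which has its two cycle-neighbors synchronized with it, yields the chain $\alpha_{i_1} = \alpha_{i_3} = \alpha_{i_5} = \cdots = \alpha_{i_{2k+1}}$. Applying the lemma once more at $i_1$, whose cycle-neighbors $i_2$ and $i_{2k+1}$ are both synchronized with $V_{i_1}$, gives $\alpha_{i_2} = \alpha_{i_{2k+1}}$; combining this with the previous chain produces $\alpha_{i_1} = \alpha_{i_2}$.

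However, the edge $(i_1, i_2)$ is by hypothesis a synchronized pair, so the complementary observation forces $\alpha_{i_2} \equiv \alpha_{i_1} + \pi \pmod{2\pi}$, which is incompatible with $\alpha_{i_1} = \alpha_{i_2}$. Hence no odd cycle in $G(r)$ can be fully synchronized. The only step that requires a small amount of care is cleanly extracting the $\pi$-offset observation from the proof of Lemma~\ref{2angles}; beyond that the argument is iterated application of the lemma together with routine parity bookkeeping around the cycle, so I do not anticipate any real obstacle.
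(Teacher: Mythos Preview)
Your proof is correct and follows essentially the same route as the paper's: both arguments use Lemma~\ref{2angles} to propagate equality of starting angles around the odd cycle until two adjacent vertices are forced to share the same starting angle, contradicting the fact that synchronized neighbors cannot. The only cosmetic difference is that you extract the sharper $\pi$-offset relation $\alpha_j\equiv\alpha_i+\pi$, whereas the paper is content with the weaker $\alpha_i\neq\alpha_j$ (which already suffices for the contradiction).
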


\begin{proof}
  Recall that if $(i,j)\in E(r)$ then $\phi_{ij}\ne\phi_{ji}$ and, consequently, $\alpha_i\ne\alpha_j$. Suppose that $G$ contains an odd cycle $\langle V_{i_0},V_{i_1},\ldots,V_{i_{2c}},V_{i_0}\rangle$ that can be synchronized with starting angles $\alpha'_0,\alpha'_1,\ldots,\alpha'_{2c}$, respectively. Since the two neighbors of any vertex in the cycle must share the same starting angle (Lemma~\ref{2angles}), it follows that, as we move around the cycle, starting angles alternate between $\alpha'_0$ and $\alpha'_1$. This forces two neighbors (say $V_{i_0}$ and $V_{i_{2c}}$) to have the same starting angle, a contradiction.
\end{proof}

Now we are ready to prove the main result for the simplified model when the robots move in the same direction.

\begin{theorem}
Let $G(r)=(V,E(r))$ be communication graph of $n$ robots flying in the same direction and with the same speed and communication range $r$. The system can be fully synchronized if and only if $G(r)$ is bipartite. Moreover, the condition $\alpha_{i} = \pi + \alpha_{j}$ for every $(i,j)\in E(r)$ ensures synchronization of the team.
\end{theorem}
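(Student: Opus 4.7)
The plan is to first distill a clean algebraic synchronization condition for a single edge, then read both directions of the theorem off from it. Assume for convenience unit angular speed, so at time $t$ the robot $V_i$ is at angle $\alpha_i+t \pmod{2\pi}$. For an edge $(i,j)\in E(r)$, the pair $V_i,V_j$ is synchronized iff the times at which each reaches its link position agree modulo $2\pi$, i.e.
\[
\phi_{ij}-\alpha_i \;\equiv\; \phi_{ji}-\alpha_j \pmod{2\pi}.
\]
Using $\phi_{ji}=\pi+\phi_{ij}$, this simplifies to the purely combinatorial relation
\[
\alpha_i-\alpha_j \;\equiv\; \pi \pmod{2\pi},
\]
which is independent of the underlying geometry $\phi_{ij}$. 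I will establish this as a preliminary observation; it is the engine of the whole proof and, in particular, immediately implies the ``moreover'' clause.

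For the necessity direction, I would argue as follows. Suppose the team can be fully synchronized with starting angles $\alpha_1,\ldots,\alpha_n$. Then by the observation above, adjacent vertices in $G(r)$ have starting angles that differ by $\pi \pmod{2\pi}$; in particular $\alpha_i \neq \alpha_j$ for every edge $(i,j)$. Corollary~\ref{cor:not_oddCycle} now prohibits odd cycles in $G(r)$, so $G(r)$ must be bipartite.

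For the sufficiency direction, suppose $G(r)$ is bipartite and let $\{A,B\}$ be a 2-coloring of the vertices of each connected component (choose any 2-coloring per component independently). Pick any base angle $\alpha \in [0,2\pi)$ and set
\[
\alpha_i \;=\; \begin{cases}\alpha,& i\in A,\\ \alpha+\pi,& i\in B.\end{cases}
\]
For every edge $(i,j)\in E(r)$, its endpoints lie in different parts, so $\alpha_i-\alpha_j\equiv \pi\pmod{2\pi}$, and the preliminary observation then yields that $V_i$ and $V_j$ are synchronized. Thus every edge of $E(r)$ corresponds to a synchronized pair, so the team is fully synchronized. The same schedule also verifies the ``moreover'' clause: the stated condition $\alpha_i=\pi+\alpha_j$ is exactly the per-edge equation just derived.

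The only subtlety I expect is the preliminary observation itself—specifically, being careful that the arrival times must agree \emph{modulo} $2\pi$ rather than absolutely, since the meetings recur every $2\pi$ units of travel. Once that is cleanly stated, both directions are short: necessity reduces to an appeal to the odd-cycle corollary, and sufficiency is an explicit 2-coloring construction.
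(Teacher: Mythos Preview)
Your proposal is correct and follows essentially the same route as the paper: both arguments reduce synchronization on an edge to the antipodal condition $\alpha_i-\alpha_j\equiv\pi\pmod{2\pi}$ (the paper phrases this as ``antipodal starting positions''), invoke Corollary~\ref{cor:not_oddCycle} for necessity, and exhibit an explicit 2-coloring schedule for sufficiency. Your version front-loads the algebraic edge condition a bit more explicitly than the paper does, but the underlying ideas and structure coincide.
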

\begin{proof}
If $G(r)$ is not bipartite then it contains an odd cycle and by Corollary~\ref{cor:not_oddCycle}, the team cannot be fully synchronized. If $G(r)$ is bipartite then it is two-colorable. After a white-black coloration of the graph we set an arbitrary position $\beta$ for all white ARs and set $\pi + \beta$ for the black ones. Let $(i,j)$ be any edge of $E(r)$. Then $V_i$ is white and $V_j$ is black or vice versa. Since $\phi_{ji}=\pi+\phi_{ij}$, the starting positions of $V_i$ and $V_j$ are antipodal.
(see Figure~\ref{fig:bipartite-synch}).
In fact, if $V_i$ and $V_j$ start in antipodal positions ($\alpha_j=\pi+\alpha_i$), and maintain the same speed and direction, they will be occupy antipodal positions and, consequently, will repeatedly reach the link position at the same time.


\begin{figure}[h]
    \centering
    \includegraphics[scale=0.6]{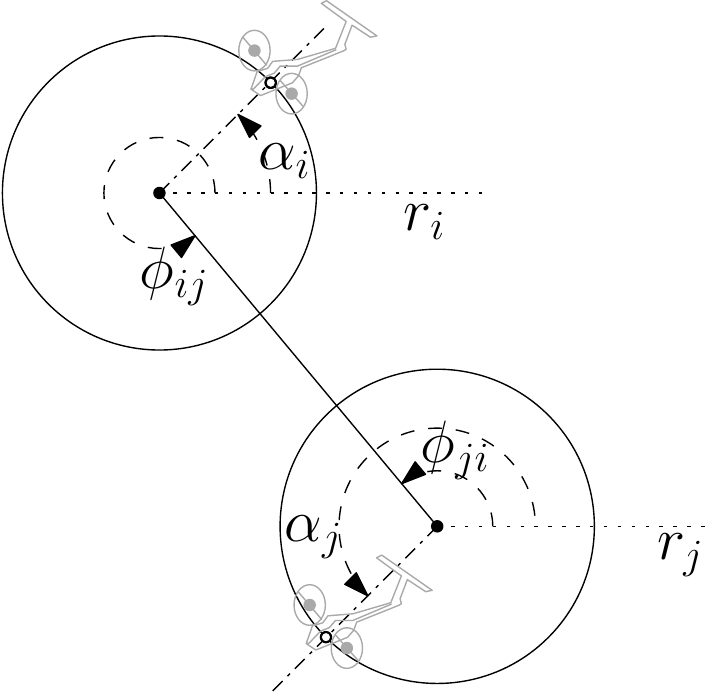}
    \caption{This Figure illustrates the start position of two synchronized neighbors in $G(r)$.}
    \label{fig:bipartite-synch}
\end{figure}

\end{proof}

\subsection{Flying in opposite directions}
In this subsection we derive conditions that ensure full synchronization of the team when the members of every pair of neighbors fly in opposite directions. Note that such a model can only be deployed on bipartite communication graphs. In this case, the partition of the ARs according to direction of flight (CW or CCW) corresponds exactly to the partite sets of the bipartite communication graph.
This model is important because it will allow us to provide a certain degree of robustness while preserving full synchronization (see subsection~\ref{sec:robutness}).


Let $(i,j)$ be an edge in $E(r)$ and $l_{ij}$ be the supporting line of the edge $(i,j)$. We denote by $\beta_{ij}$ the angle of the line $l_{ij}$ measured from the positive horizontal axis.

\begin{lemma}\label{lem:synch_twoDir}
Let $G(r)=(V,E(r))$ be the communication graph in the simplified model.
 Let $(i,j)$ be an edge in $E(r)$ and consider the ARs  $V_i$ and $V_j$ moving in opposite directions. Then, $V_i$ and $V_j$ are synchronized iff $\alpha_j = 2 \beta_{ij} - \alpha_i \pm \pi$.
\end{lemma}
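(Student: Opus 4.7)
The plan is to set up the motions as explicit functions of time and reduce synchronization to a simultaneous congruence condition on the two angular parameters. Without loss of generality assume $V_i$ flies counter-clockwise and $V_j$ clockwise at unit angular speed, so that their angular positions on their respective unit circles are
\begin{equation*}
\theta_i(t) = \alpha_i + t, \qquad \theta_j(t) = \alpha_j - t \pmod{2\pi}.
\end{equation*}
The first step is to pin down the link positions in terms of $\beta_{ij}$. Since $l_{ij}$ is the line through the two centers $c_i$ and $c_j$ at angle $\beta_{ij}$, the point of $C_i$ closest to $C_j$ is seen from $c_i$ in direction $\beta_{ij}$, while the point of $C_j$ closest to $C_i$ is seen from $c_j$ in the opposite direction. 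Hence $\phi_{ij}=\beta_{ij}$ and $\phi_{ji}=\beta_{ij}+\pi$ (mod $2\pi$), consistent with the general identity $\phi_{ji}=\pi+\phi_{ij}$ recalled in the problem formulation.

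Next, I would express synchronization directly: $V_i$ and $V_j$ are synchronized exactly when there is some time $t^{*}$ at which $\theta_i(t^{*})\equiv \phi_{ij}$ and $\theta_j(t^{*})\equiv \phi_{ji}$ mod $2\pi$. Substituting the parametrizations and the expressions for $\phi_{ij},\phi_{ji}$ gives the two congruences
\begin{equation*}
t^{*} \equiv \beta_{ij}-\alpha_i, \qquad t^{*} \equiv \alpha_j-\beta_{ij}-\pi \pmod{2\pi}.
\end{equation*}
Eliminating $t^{*}$ yields $\alpha_j \equiv 2\beta_{ij}-\alpha_i+\pi \pmod{2\pi}$, which is exactly $\alpha_j = 2\beta_{ij}-\alpha_i\pm\pi$ (since $+\pi$ and $-\pi$ coincide modulo $2\pi$). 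This proves both directions at once: the existence of $t^{*}$ is equivalent to the single scalar equation above, and conversely, if that equation holds, then picking $t^{*} = \beta_{ij}-\alpha_i$ furnishes a synchronous meeting, which by periodicity recurs every $2\pi$ units of travel.

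The only genuinely delicate point is bookkeeping the orientations and the mod-$2\pi$ ambiguity: one must be sure that reversing the convention (taking $V_i$ CW and $V_j$ CCW) leads to the same equation, since the roles of $i$ and $j$ must be exchangeable, and that the $\pm\pi$ really collapses to a single residue class. A short symmetry check using $\beta_{ji}=\beta_{ij}+\pi$ confirms that swapping $i$ and $j$ reproduces the identical condition, so no generality is lost by the initial choice of directions. Everything else is a direct computation, so I expect no real obstacle beyond making these sign conventions explicit.
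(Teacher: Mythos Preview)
Your argument is correct. You parametrize the two motions explicitly as $\theta_i(t)=\alpha_i+t$ and $\theta_j(t)=\alpha_j-t$, identify $\phi_{ij}=\beta_{ij}$ and $\phi_{ji}=\beta_{ij}+\pi$, and reduce synchronization to the simultaneous solvability of two congruences in $t^{*}$; eliminating $t^{*}$ gives the claimed relation, and the check with $\beta_{ji}=\beta_{ij}+\pi$ shows the formula is independent of which robot is taken CCW.

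The paper argues differently, via a geometric symmetry rather than a time parametrization. It observes that when neighbors move at equal speed in opposite directions and meet at the link, their positions at \emph{any} instant are mirror images across the center line $l_{ij}$ (after translating one circle onto the other) followed by an antipodal shift. Concretely, it reflects $\alpha_i$ across $l_{ij}$ to get $\alpha_i'=2\beta_{ij}-\alpha_i$ and then notes $\alpha_j=\alpha_i'\pm\pi$. Your approach has the advantage of making the ``iff'' and the mod-$2\pi$ bookkeeping completely explicit, and it immediately exhibits a witnessing meeting time $t^{*}=\beta_{ij}-\alpha_i$. The paper's approach, on the other hand, explains structurally \emph{why} the formula decomposes as a reflection in $l_{ij}$ composed with the antipodal map, which is useful intuition when these relations are later chained around a cycle.
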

\begin{proof}
The angles formed by positions $\alpha_j$ and $\alpha_i$ with $l_{ij}$ are equal. Let $\alpha_i'$ be the symmetrical position of $\alpha_i$ obtained by reflecting $\alpha_i$ with respect to $\l_{ij}$. Translating the position $\alpha_j$ from $C_j$ to $C_i$ we see that $\alpha_j$ and $\alpha_i'$ are antipodal, i.e., they differ by $180^{\circ}$ (see Figure~\ref{fig:proof-diff-dir}).
\begin{align}
\alpha_i'&=\beta_{ij}-(\alpha_i-\beta_{ij}) \nonumber\\
&=2\beta_{ij}-\alpha_i \nonumber
\end{align}
To derive $\alpha_j$ we simply add or subtract $\pi$ to $\alpha_i'$.
\end{proof}
\begin{figure}[h]
\centering
\includegraphics[scale=.7]{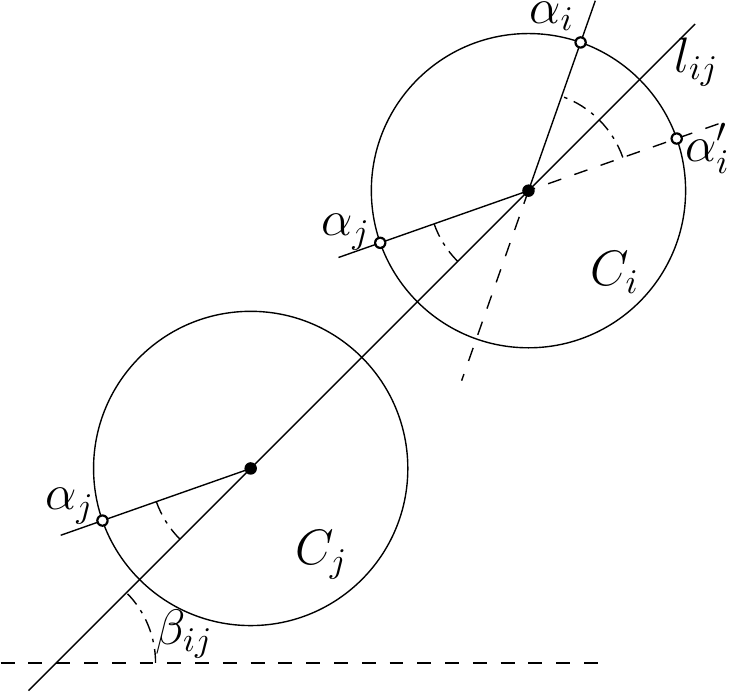}

\caption{The position $\alpha_i'$ is the symmetric of $\alpha_i$ respect $l_{ij}$. The position $\alpha_j$ is the opposite of $\alpha_i'$.}
\label{fig:proof-diff-dir}
\end{figure}
\begin{theorem}\label{thm:opposite_cycles}
Let $G(r)=(V,E(r))$ be a communication graph in the simplified model.
Suppose that $G(r)$ is bipartite and that every pair of neighbors are scheduled to fly in opposite directions. Then the system is fully synchronized if and only if all even cycles
$\langle V_{i_1},V_{i_2},\ldots,V_{i_{2k}},V_{i_1}\rangle$ in $G_r$  
satisfy:
$$\beta_{i_1i_2}-\beta_{i_2i_3}+\beta_{i_3i_4}-\beta_{i_4i_5}+\dots+\beta_{i_{2k-1}i_{2k}}-\beta_{i_{2k}i_1}=2m\pi$$
with $m\in\mathbb{N}$.

\end{theorem}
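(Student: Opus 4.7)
My plan is to iterate Lemma~\ref{lem:synch_twoDir} around the cycle and extract a closure condition, treating necessity and sufficiency separately.

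For necessity, I would assume the system is fully synchronized and apply the lemma to each of the $2k$ edges of the cycle. Each instance gives a relation of the form $\alpha_{i_l} + \alpha_{i_{l+1}} \equiv 2\beta_{i_l i_{l+1}} + \pi \pmod{2\pi}$, since the $\pm \pi$ ambiguity collapses modulo $2\pi$. Adding these $2k$ relations with alternating coefficients $+1, -1, +1, -1, \dots, +1, -1$ causes every $\alpha_{i_l}$ to cancel (each appears once with each sign in two consecutive equations) and the $\pi$ terms to cancel as well (there are $k$ plus and $k$ minus signs, because $2k$ is even). What survives is $2\bigl(\beta_{i_1 i_2} - \beta_{i_2 i_3} + \cdots + \beta_{i_{2k-1}i_{2k}} - \beta_{i_{2k} i_1}\bigr) \equiv 0 \pmod{2\pi}$, which is the stated cycle condition.

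For sufficiency, I would construct a synchronization. I fix a spanning tree $T$ of each connected component of $G(r)$, root it at an arbitrary vertex, assign a free starting angle there, and propagate along tree edges using the lemma: for each parent--child edge $(u,w)$ in $T$, set $\alpha_w := 2\beta_{uw} - \alpha_u + \pi$. This yields a well-defined angle at every vertex. To verify the non-tree edges, I observe that any non-tree edge $(u,w)$ closes an even cycle together with the unique $u$--$w$ path in $T$ (even because $G(r)$ is bipartite). Composing the lemma along this cycle and invoking the cycle hypothesis shows that the values $\alpha_u$ and $\alpha_w$ produced by the tree propagation automatically satisfy the relation of Lemma~\ref{lem:synch_twoDir} on the edge $(u,w)$, so this edge is synchronized. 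To reduce the hypothesis ``all even cycles'' to the fundamental cycles used above, I would rely on the fact that the cycle space of $G(r)$ is generated by fundamental cycles via symmetric difference, and that the alternating-sum closure condition is additive under this operation.

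The main obstacle I anticipate is orientation bookkeeping: when traversing a cycle or combining fundamental cycles, shared edges must be oriented consistently so that the alternating signs align correctly. Working modulo $2\pi$ washes out the $\pm \pi$ ambiguity of the lemma, but one still needs to verify that the closure condition obtained from fundamental cycles transfers cleanly to an arbitrary even cycle, and that the starting angle at the root really is a free parameter (which is where the factor $(-1)^{2k} = 1$ after $2k$ propagations is essential). Once this is in place, both implications reduce to direct cycle-composition calculations.
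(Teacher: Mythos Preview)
Your approach is essentially the paper's: both iterate Lemma~\ref{lem:synch_twoDir} around the cycle, propagating $\alpha_{i_{l+1}} = 2\beta_{i_l i_{l+1}} - \alpha_{i_l} + \pi$ step by step until the closure condition on $\alpha_{i_1}$ forces the alternating sum of the $\beta$'s to vanish modulo $2\pi$ (up to the factor of $2$). The paper carries out this substitution explicitly for a single generic cycle and stops there; your alternating-sum formulation is an equivalent repackaging of the same telescoping.

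Where you go further is on sufficiency: the paper's proof treats only one cycle and leaves the global construction implicit, whereas you spell out the spanning-tree propagation, the check on non-tree edges via fundamental cycles, and the additivity of the closure condition under symmetric difference in the cycle space. That extra structure is not in the paper but is the natural way to make the ``if'' direction rigorous for a graph with several cycles, so it is a genuine strengthening rather than a different route.
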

\begin{proof}
Let $\alpha_{i_1}$ be the starting position of $V_{i_1}$. 
We can use Lemma~\ref{lem:synch_twoDir} to compute the starting position of the remaining ARs (arbitrarily using $+\pi$ instead of $-\pi$ as described in the lemma):
\begin{align}
\alpha_{i_2}&= 2\beta_{i_2i_1}-\alpha_{i_1} + \pi \nonumber\\
\alpha_{i_3}&= 2\beta_{i_3i_2}-2\beta_{i_2i_1}+\alpha_{i_1} \nonumber\\
\alpha_{i_4}&= 2\beta_{i_4i_3}-2\beta_{i_3i_2}+2\beta_{i_2i_1}-\alpha_{i_1}+\pi \nonumber\\
\vdotswithin{\alpha_{i_{2k}}}&\vdotswithin{2\beta_{i_{2k}i_{2k-1}}-2\beta_{i_{2k-1}i_{2k-2}}+}\vdotswithin{\quad\dots\quad} \nonumber\\
\alpha_{i_{2k}}&=2\beta_{i_{2k}i_{2k-1}}-2\beta_{i_{2k-1}i_{2k-2}}+\quad\dots\quad-\alpha_{i_1}+\pi \nonumber\\
\alpha_{i_1}&=2\beta_{i_1i_{2k}}-2\beta_{i_{2k}i_{2k-1}}+\quad\dots\quad-2\beta_{i_2i_1}+\alpha_{i_1} \label{eq:cycle_inverted}
\end{align}
From Equation~\ref{eq:cycle_inverted} we obtain $$2\beta_{i_1i_{2k}}-2\beta_{i_{2k}i_{2k-1}}+\quad\dots\quad-2\beta_{i_2i_1}=0\ (\mbox{mod } 2\pi) $$
\end{proof}

\subsection{Robustness}\label{sec:robutness}
We now address the issue of robustness of the synchronized system.
Imagine that one member $V$ of the team becomes unavailable because of failure or because it needs to leave the system temporarily (for instance, to refuel). First, let us consider the latter scenario. We want to minimize the detrimental effect of the departing AR on global system performance. Then, a simple strategy consists of repeatedly ``swapping with a neighbor'' until $V$  leaves the system as illustrated in Figure~\ref{fig:swappingToOut}.
The best moment to make a swap with a neighbor is when the ARs involved arrive to a link position (this is, indeed,  another advantage of a synchronized system). However, if all ARs fly in the same direction,  the kinematic constraints of the aerial vehicles can prevent making a swap with a neighbor (see Figure~\ref{fig:swappingSameDir}). However, the swap is not a problem if neighbors are flying in opposite directions. In this case, the neighboring robots can interchange their routes with a smooth maneuver (see Figure~\ref{fig:swappingDiffDir}). Consider now the case of  catastrophic failure, where one (or more) robots fail and
can no longer move. Under the assumption that
robots which fail do not block live robots, we can solve easily the problem while maintaining the set of trajectories.
In this case, one or more live neighbors can assume the tasks of the inoperative robots by using the same routes. Moreover, a new robot can be inserted in the system by using the swap strategy as explained above. In both cases, the system can be restored  by means of local changes and this ensures the
robustness of the approach.

The following theorem establishes a non-redundancy property of ARs occupying the trajectory of a fallen
neighbor
by assuming the
following conditions:
(i) All ARs require the same time to complete a tour in any trajectory, and
 (ii) The time required to switch trajectories (at a link position) is negligible, which should be
 possible by slightly increasing or decreasing the speed.

\begin{theorem}
In a synchronized system allowing swaps in case of failures, each trajectory is occupied at most by one AR.
\end{theorem}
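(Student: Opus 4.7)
The plan is to argue by contradiction: suppose some trajectory $T_i$ carries two or more live ARs, then show this forces a global headcount incompatible with the initial configuration. By assumption (i) all ARs traverse any trajectory in the same time $\tau$, so any two ARs sharing $T_i$ move with identical angular speed and hence preserve a constant nonzero angular separation. In particular, they visit every fixed point of $T_i$, including every link position $\phi_{ij}$, at $n_i$ pairwise distinct times within each cycle, where $n_i$ denotes the number of ARs on $T_i$.

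The first substantive step is to propagate this excess to every neighbor. Fix a neighbor $T_j$ of $T_i$. Synchronization between these two trajectories should be read (in the multi-AR setting) as: every visit to $\phi_{ij}$ by an AR of $T_i$ is matched by a simultaneous visit to $\phi_{ji}$ by an AR of $T_j$. Since every single AR on $T_j$ reaches $\phi_{ji}$ exactly once per cycle of length $\tau$, matching the $n_i$ distinct arrivals coming from $T_i$ requires $n_j \geq n_i$. The symmetric argument yields $n_i \geq n_j$, hence $n_i = n_j$. It follows that the per-trajectory AR count is constant on each connected component of the communication graph.

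The second step is a counting argument. Let $C$ be the connected component of the communication graph containing $T_i$, and let $c$ denote the common value of the AR count on $C$, so that $C$ currently holds $c \cdot |C|$ live ARs. On the other hand, swaps take place only between neighboring trajectories, so they move ARs strictly within a single component and preserve the per-component count, while failures only decrease it. Since the initial configuration places one AR per trajectory, $C$ holds at most $|C|$ live ARs at any time. Combining gives $c \cdot |C| \leq |C|$, i.e., $c \leq 1$, contradicting the assumption $n_i \geq 2$.

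The main obstacle I anticipate is pinning down the extended notion of synchronization when a trajectory may carry more than one AR: the definition I use (every link-position visit on one side is matched simultaneously on the other) is the natural generalization and is precisely what makes the equation $n_i = n_j$ go through; conditions (i) and (ii) enter implicitly to guarantee that after any sequence of swaps the arrival times remain governed by a uniform cycle of length $\tau$, so the steady-state counting argument is legitimate.
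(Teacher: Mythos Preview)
Your argument has a genuine gap at the first substantive step. The ``extended synchronization'' property you posit---that every arrival at $\phi_{ij}$ on $T_i$ is matched by a simultaneous arrival at $\phi_{ji}$ on $T_j$---is neither the hypothesis of the theorem nor an invariant of the dynamics. The theorem concerns a system that \emph{starts} synchronized (one AR per trajectory) and then evolves under failures and the swap rule. After a failure, a trajectory $T_j$ may be empty, so an arrival at $\phi_{ij}$ from $T_i$ is unmatched; this is precisely the event that triggers a swap. Hence you cannot conclude $n_j \ge n_i$, and the propagation step collapses. Counts are not constant along edges: the configuration $n_i=1$, $n_j=0$ occurs routinely and is exactly what the swap mechanism is designed to handle.

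The paper's proof takes a completely different, local route. The correct invariant (implied by conditions (i) and (ii)) is a \emph{phase-lock}: any AR currently on $P_i$ occupies the same position that the original AR $i$ would have occupied, because swaps at link positions preserve the synchronized phase. Consequently, if $P_i$ already has an occupant when some AR $k$ on a neighboring trajectory reaches $\phi_{ki}$, that occupant is simultaneously at $\phi_{ik}$, so $k$ detects it and the swap rule forbids entry. This shows directly that a second AR can never enter an occupied trajectory, without any global counting. Your counting argument would only work if the matching property were guaranteed, but establishing that essentially requires proving the theorem first.
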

\begin{proof}
Assume, for the sake of contradiction, that there exists a trajectory $P_i$ occupied by two ARs $j$ and $k$
who entered $P_i$ trough $\phi_{ji}$ and $\phi_{ki}$, respectively. An AR may enter a neighbor's trajectory
when it arrives to a link position and the neighbor assigned to the trajectory is not there. Thus, $j$
and $k$ could not have entered $P_i$ at the same time because $i$ cannot be at $\phi_{ij}$ and
$\phi_{ik}$ simultaneously. Suppose that $k$ entered last, then when $k$ arrives at $\phi_{ki}$,
$j$ is flying in $P_i$ and has reached $\phi_{ik}$ (by previous conditions). As a consequence,  $k$
does not enter in $P_i$ because it detects that $P_i$ is occupied by $j$, a contradiction.
\end{proof}



\begin{figure}[h]
\centering
\includegraphics[scale=0.6, trim = 2cm 1cm 0cm 0cm, clip=true]{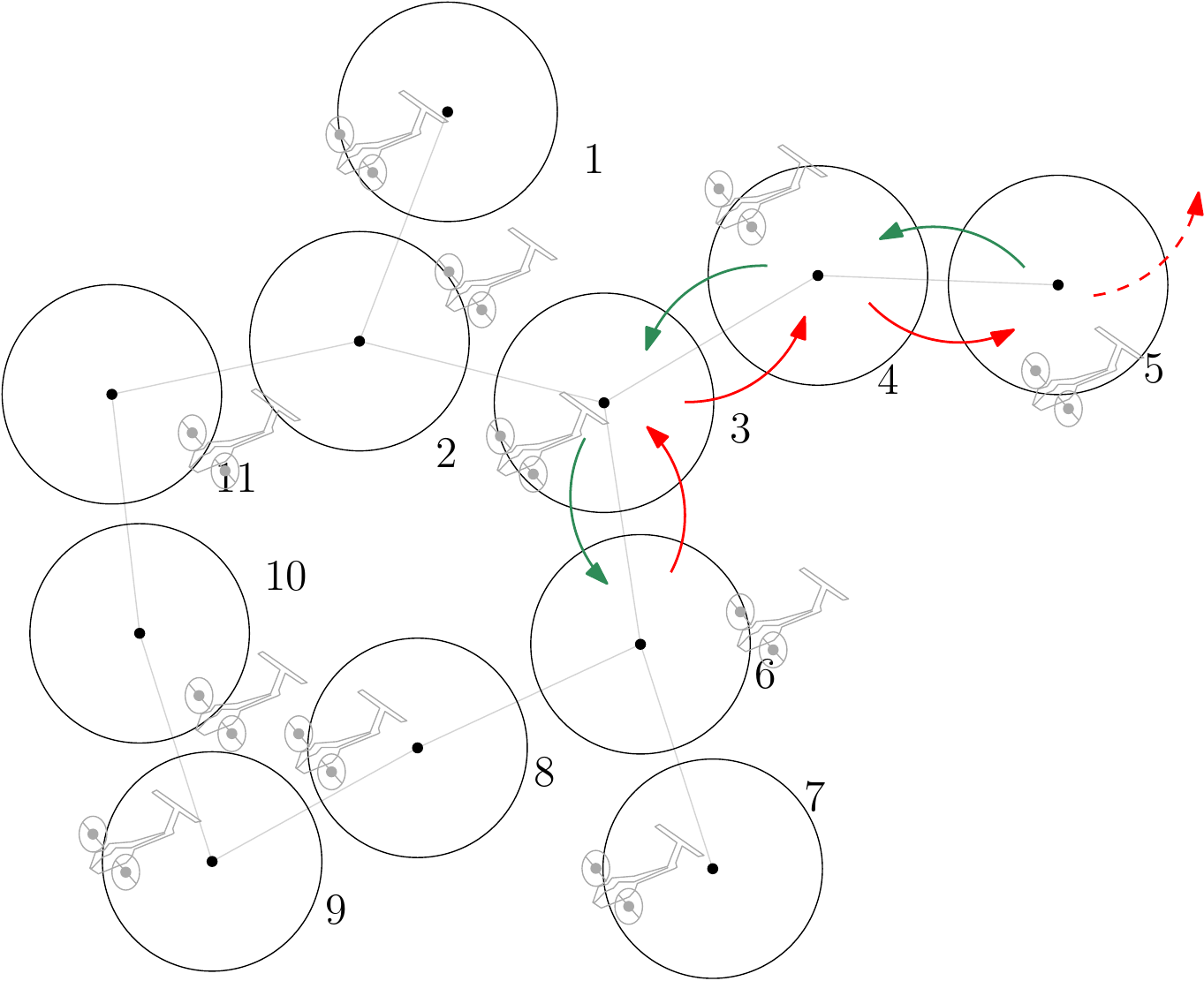}
    \caption{The AR $6$ can leave the system swapping with $3$, $4$ and $5$ in this order.}
    \label{fig:swappingToOut}
\end{figure}

\begin{figure}[h]
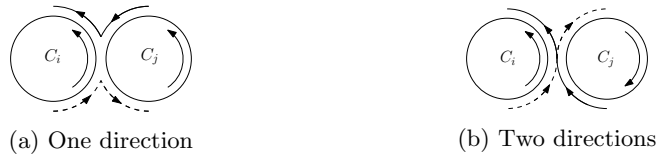

\centering
\begin{subfigure}{0.5\textwidth}
\centering
\includegraphics[width=.4\textwidth , page=2]{swapping-fuel.pdf}
    \caption{One direction}
    \label{fig:swappingSameDir}
\end{subfigure}%
\begin{subfigure}{0.5\textwidth}
\centering
\includegraphics[width=.4\textwidth , page=3]{swapping-fuel.pdf}
    \caption{Two directions}
\label{fig:swappingDiffDir}
\end{subfigure}
\caption{Swapping by using one or two directions. The swapping paths in (b) is smoother than in (a).}
\end{figure}


\section{The synchronization algorithm}\label{sec:synchro_alg}
The synchronization algorithm consists of several stages, described below.
\subsection{Computing the communication graph}

Notice that, in the simplified model, the communication graph is actually the \emph{intersection graph} of a set of enlarged trajectories, namely, the set of disks of radius $1+r/2$ with the same trajectory centers. Therefore, the intersection graph can be computed in linear time (\cite{bentley}).

\subsection{Computing the maximum bipartite subgraph}

We can test the bipartiness of the communication graph in linear time. If it is bipartite we are done,
else a reasonable strategy is to use a bipartite subgraph with the maximum number of edges.
Finding the maximum bipartite subgraph of a topological graph
is an NP-complete problem (\cite{karpr72}). However, such a subgraph can be found in polynomial
time when the input graph is planar (\cite{Hadlock75}). Alternatively, there exist various
approximation algorithms that can be useful in practice (see for instance \cite{eppstein94}).
%
%
%
%
%

It should be noted that the condition of planarity is not overly restrictive.
Many applications produce planar communication graphs.
Indeed, robot trajectories can be changed (and even shortened) to avoid double crossings.
Moreover, we can assume that robots cannot turn at sharp angles. As a consequence,  the communication
range $r$ is typically much smaller that the ``width'' of the curve and no crossing occurs.

 \subsection{ Scheduling the flights with opposite directions.}
Algorithm~\ref{alg:preprocessing}, shown below, preprocesses the trajectories and
computes the starting positions of all ARs in the team. The input values are: the list of the pairwise disjoint unit circles ($C$), the communication range of the ARs ($r$) and the tour time ($T$), and the output values are: the list of starting positions of the ARs (\emph{SPos}), the list of the (initial) directions of movement of the ARs ($dirs$) and a structure $M$ that contains synchronization information ($M[i,j]$ stores the region of communication between the $C[i]$ and $C[j]$ if it exists and it is used in the synchronization).
%
%

\begin{algorithm}[h!]
\caption{Preprocessing to deploy ARs team}
\label{alg:preprocessing}
  \SetKwInOut{Input}{input}
  \SetKwInOut{Output}{output}
  \BlankLine
  \Input{$C, r, T$}
  \Output{$SPos,dirs, M$}
  \BlankLine

  $G\gets underGraph(C,r)$\;
  \If{\NOT $G$ is bipartite}{
  	$G\gets maxBiSubgraph(G)$\;
  }
  \If{\NOT $G$ is synchronizable}{
 	$G\gets maxSynchSubgraph(G)$\;
  }
  $v\gets first(V(G))$\;
  $q\gets queue()$\;
  $q.enqueue(v)$\;
  $analyzed\gets \{\}$\;
  $SPos\gets list[V(G)]$\;
  $dirs\gets list[V(G)]$\;
  $SPos[v]\gets 0$\;
  $dirs[v]\gets true$\;
  \While{\NOT $q$ is empty}{
  	$w\gets q.dequeue()$\;
  	$m\gets G.getNeighbors(w)$\;
  	\ForAll{$a\in m$}{
  		\If{$a\not\in analyzed$}{
  			$analyzed.add(a)$\;
  			$\beta_{w,v}\gets angleOf(<w,v>)$\;
  			$SPos[a]\gets 2*\beta_{w,v}-SPos[w]-\pi$\;
  			$dirs[a]\gets \NOT dirs[w]$\;
  			$q.enqueue(a)$\;
  		}
  	}
  }
  $M\gets encode(G)$\;
  \Return $SPos,dirs, M$\;
\end{algorithm}

\begin{algorithm}[h!]
	\caption{Onboard execution}
	\label{alg:onboard}
	\SetKwInOut{Input}{input}
	\Input{$C,i,M,\alpha,d,t,wPlan$}

	\BlankLine
	\quad fly to assigned trajectory $C[i]$\;
	\quad locate in position $\alpha$\;
	\quad wait until $cTime=t$\;
	\quad $wA\gets \{C[i]\}$\;
	\quad $fPlan\gets flightPlan(M,wPlan,t,\alpha,d,wA)$\;
	\quad $load(fPlan)$\;
	\BlankLine
	\While{\NOT ABORT}{
		$doWork(wPlan, cTime, cPos)$\;
		\If{entering in communication region}{
			$openConnections()$\;
			$f\gets false$\;
		}
		\If{within communication region}{
			$ni\gets getNeighborIndex(M,cPos)$\;
			\If{detect a neighbor}{
				$f\gets true$\;
				$shareInfo()$\;
				\If{$C[ni]\in wA$}{
					$wA\gets split(wA, C[ni])$\;
					$fPlan\gets flightPlan(M,wPlan,$
					
					\quad\quad\quad\quad\quad\quad$cTime,cPos, d,wA)$\;
					$load(fPlan)$\;
				}
			}
		}
		\If{coming out communication region}{
			$closeConnections()$\;
			\If{$\NOT f \AND C[ni]\not\in wA$}{
				$wA\gets join(wA, S[ni])$\;
				$fPlan\gets flightPlan(M,wPlan,$

				\quad\quad\quad\quad\quad\quad$cTime,cPos,d,wA)$\;
				$load(fPlan)$\;
			}
		}
	}

\end{algorithm}


Algorithm~\ref{alg:onboard} runs onboard of the ARs and takes as input the list of the disjoint unit circles ($C$), index of the assigned trajectory ($i$), the structure $M$ produced by Algorithm~\ref{alg:preprocessing}, the initial position ($\alpha=\mbox{\em SPos}[i]$), the directions of movement ($d=\mbox{\em dirs}[i]$), the performance starting time ($t$) and an abstract data structure with the info about the general task an local subtasks to perform (\emph{wPlan}).
The algorithm also uses auxiliary subroutines: \emph{flightPlan} (compute a flight plan), \emph{load} (load the flight plan in the navigation system), \emph{doWork} (do the abstract task), \emph{openConnections} (open the interface of communication), \emph{closeConnections} (close the interface of communication), \emph{shareInfo} (share stored information with the corresponding neighbor) and \emph{getNeighborIndex} (obtain index in $S$ of a neighbor trajectory).

%
%
%
%
%

\section{Generalization to non-circular trajectories and heterogeneous robots}\label{gener}
So far we have assumed homogeneous robots, with equal speeds and fuel capacities.
In this section we describe how to extend the results under a more general and more realistic setting.
Let us consider a team of $n$ ARs with different capabilities and performing their corresponding tasks through arbitrary disjoint closed trajectories $\{P_1,\dots,P_n\}$.

Naturally, we have some control on the speeds of the ARs as they can accelerate or decelerate to increase or decrease their speed. The speed can even change on different sections of the same trajectory.
The key to generalizing the synchronization scheme used in the simple theoretical
model is to force all members of the team to take (approximately) the same time to
make a tour of their respective trajectories. In the ideal theoretical model two
neighboring ARs reach the communication link at the same time. In a real
implementation of these strategies, however, we need to allow for some margin of error
in order to guarantee robustness. This can be accomplished by having each AR compare
its current and target locations at regular time intervals, and adjust its speed, if necessary,
in order to resynchronize.
We refer to the time that an AR takes to make a tour as the \emph{system period}.

We can use the communication links to partition a trajectory $P_i$ into sections and assign each of them a travel time $t_j$ such that $\sum_jt_j=T$, from this assignment we can compute the required speeds in each section. Once we know the required speed for each section, it suffices to know the starting position and movement direction of an AR in order to compute its position at any time. So, the problem in the general case is: given a system period $T$, partition the trajectories into sections, assign a time to each section so that the cumulative time of all sections is $T$, and compute the initial position for every AR in its trajectory such that the system is synchronized.

The construction of the communication graph is similar to the case of the simplified
model. We include a node in the communication graph for each trajectory $P_i$ and
include an edge between two nodes if the minimum distance between the respective
trajectories is within the communication range. (Note that we allow at most one edge
between two nodes. If there are multiple pairs of points on two trajectories whose
distances are within the communication threshold we choose the closest pair.)
To compute the edges we can use known algorithms from computational geometry (e.g.,
\cite{wang86,quinlan94}) to efficiently find the  minimum distance between two
polygons.
Then, let $S=\{P_1, P_2, \dots, P_n\}$ be a set of disjoint closed
trajectories assigned to a team of $n$ ARs. Also, let $G=(V,E)$ be the computed
communication graph on $S$ using the communication range $r_i$ of each AR. Borrowing
notation from previous sections, we use $\phi_{ij}$ to denote the location of $P_i$
closest to $P_j$, for $(i,j)\in E$. (Note that in the generalization $\phi_{ij}$ is
not an angle as in circular model, but rather a location in the curve $P_i$ that can
be parameterized to a value in $[0,2\pi)$ if so desired).

Let $T$ denote the system period. If the communication graph is a tree, then considering each trajectory as consisting of a single section, we obtain that the speed in each trajectory $P_i$ is constant and equal to the ratio $l_i/T$ where $l_i$ is the length of $P_i$. Then fixing arbitrary the initial position of a AR $i$, we can compute the initial position of every neighbor of $i$ as follows. Let $j$ be a neighbor of $i$ and suppose $i$ takes $t$ units of time to reach $\phi_{ij}$ from its starting position. Thus $i$ and $j$ are synchronized if  $j$ takes $t$ units of time to reach $\phi_{ji}$ from its starting position (see Figure~\ref{fig:two_gen_synchro}).
Note that the computed initial position for $j$ depends on its chosen direction of movement. It is easy to see that  in this way we can compute the initial positions for all ARs in the synchronized system.
If the communication graph contains cycles, this simple approach does not work because considering each trajectory as a single section forces the ARs to use constant speed (possibly different from each other)
\footnote{However, it easy to see that the same approach that worked for unit circles would work here if we parameterize position to a value in $[0,2\pi)$ and allow the ARs to change its speed to cover different distances in physical space but equal distance in parameter space.}.

\begin{figure}
\centering
\includegraphics[page=2]{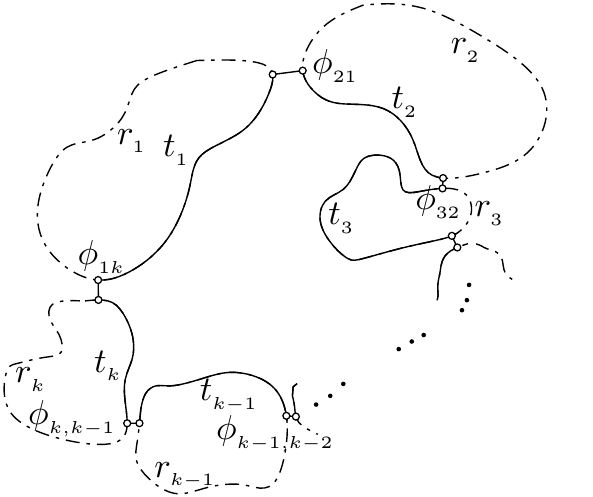}
\caption{The dashed-dotted stroke is the section of trajectory to take to achieve $\phi_{ij}$ and $\phi_{ji}$ respectively.
Both should use the same amount of time ($t$) to achieve the link position.}
\label{fig:two_gen_synchro}
\end{figure}

Figure~\ref{fig:cycle_gen_synchro} shows a cycle in the communication graph, if we know the starting position of $1$ we can compute the starting position of $2$, then of $3$, and so on until we have computed the starting position of $k$ using the starting position of $k-1$. But having done this, ARs $1$ and $k$ are not necessarily synchronized. We describe conditions that help us determine when a cycle can be synchronized with ARs flying in the same direction or with neighbors in opposite directions.

First, we introduce notation needed for the rest of this section. Let $t_i$ (resp. $r_i$) be the time it takes the AR $i$ to travel the inside (resp. outside) section of its trajectory in the cycle (see Figure~\ref{fig:cycle_gen_synchro}). We use ellapsed time to describe the location of ARs as follows: $\phi_{ij}+t$ denotes the position of $i$ obtained by moving CCW from $\phi_{ij}$ during $t$ units of time. Analogously, $\phi_{ij}-t$ denotes the position of $i$ obtained by moving CW from $\phi_{ij}$ during $t$ units of time.

\begin{figure}[h]
\centering
\includegraphics[width=.4\textwidth, page=1]{synchro-general.pdf}
\caption{ A cycle in the communication graph.}
\label{fig:cycle_gen_synchro}
\end{figure}

\begin{theorem}\label{thm:cycle_synchro_general1}
Let $P_1, P_2, \dots, P_k$ be $k$ trajectories that form a cycle in the communication graph. Let $T$ be the system period. If the ARs are flying in the same direction the cycle can be synchronized if and only if there exists $z\in\mathbb{N}$ such that:
\begin{align*}
t_1+t_2+\dots+t_k&=zT \text{\quad and}\\
r_1+r_2+\dots+r_k&=(k-z)T.
\end{align*}
\end{theorem}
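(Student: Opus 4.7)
The plan is to chain the synchronization constraints around the cycle and close the loop back at AR~1. Fix AR~1's starting position at $\phi_{1,2}$; synchronization on the link $(1,2)$ then forces AR~2 to be at $\phi_{2,1}$ at time~$0$. Let $s_i\in\{t_i,r_i\}$ denote the time AR~$i$ spends going from $\phi_{i,i-1}$ to $\phi_{i,i+1}$ along its direction of travel. Propagating the constraint, AR~$j{+}1$ must be at $\phi_{j+1,j}$ at time $s_2+s_3+\cdots+s_j$; closing at $j=k$, AR~1 must be at $\phi_{1,k}$ at time $s_2+\cdots+s_k$. But AR~1 visits $\phi_{1,k}$ only at times of the form $T-s_1+mT$, so the cycle is synchronizable if and only if
\[
s_1+s_2+\cdots+s_k \equiv 0 \pmod{T}.
\]

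The key geometric step is to show that, when all ARs fly in the same direction and the cycle is traversed consistently as $1\to 2\to\cdots\to k\to 1$, the arcs realized by the $s_i$ are uniformly of the same type: either $s_i=t_i$ for every $i$ (every AR uses its inside arc between its two cycle-links) or $s_i=r_i$ for every $i$. The intended argument is a Jordan-curve / winding argument on the planar picture of Figure~\ref{fig:cycle_gen_synchro}: the inside arcs of the cycle form a simple closed curve bounding the interior region of the cycle, and the common direction of motion on the trajectories either traces this boundary with the interior consistently on one side (giving all inside arcs) or consistently on the other (giving all outside arcs). I expect this topological dichotomy to be the main subtlety in the proof, since it is precisely where the hypothesis ``same direction'' is used.

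With uniform $s_i$, the congruence $\sum_i s_i \equiv 0 \pmod{T}$ reduces to $\sum_i t_i = zT$ in one case and $\sum_i r_i = zT$ in the other, for some $z\in\mathbb{N}$. Since $t_i+r_i=T$, we have $\sum_i t_i+\sum_i r_i=kT$, so either equality is equivalent to the simultaneous pair $\sum_i t_i=z'T$ and $\sum_i r_i=(k-z')T$ for a suitable $z'\in\mathbb{N}$, which is the theorem's condition. For the converse, assume such a $z$ exists; fix AR~1's starting position arbitrarily and use the propagation above to assign starting positions to ARs $2,\ldots,k$. The loop-closure constraint when returning to AR~1 is exactly $\sum_i s_i\equiv 0\pmod T$, which holds by the assumed divisibility, so the constructed schedule synchronizes every link of the cycle.
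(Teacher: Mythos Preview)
Your argument is essentially the paper's: both fix a starting AR, propagate the synchronization constraint around the cycle to obtain $\sum_i s_i \equiv 0 \pmod T$ as the closing condition, and then use $t_i+r_i=T$ to derive the paired equalities. The one point where you are more explicit is the dichotomy ``all $s_i=t_i$ or all $s_i=r_i$'': the paper simply takes $s_i=t_i$ via its ``without loss of generality'' choice of CCW together with the labeling in Figure~\ref{fig:cycle_gen_synchro}, whereas you correctly isolate this as the step where the same-direction hypothesis enters and sketch the Jordan-curve justification (and since the conclusion is symmetric under $z\leftrightarrow k-z$, either treatment suffices).
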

\begin{proof}
In the equations below we use $\alpha_i$ to denote the starting position of AR $i$ in $P_i$.
Without loss of generality, suppose that the ARs fly CCW and that $\alpha_1=\phi_{1k}$. Computing $\alpha_2,\alpha_3,\dots,\alpha_k$ we obtain the following result:
\begin{align*}
\alpha_1 &= \phi_{1k}  \\
\alpha_2 &= \phi_{21}-t_1  \\
\alpha_3 &= \phi_{32}-t_1-t_2  \\
\vdotswithin{\alpha_{k}}& \vdotswithin{= \phi_{32}-t_1-t_2}  \\
\alpha_{k} &= \phi_{k,k-1}-t_1-t_2-\dots-t_{k-1}   
\end{align*}
Obviously, having that $i$ is synchronized with $i+1$ for $1\leq i <k$ then the cycle can be synchronized if and only if $\alpha_1 = \phi_{1k}-t_1-t_2-\dots-t_{k}=\phi_{1k}$, from here we deduce that $t_1+t_2+\dots+t_k$ is a multiple of $T$ because $\phi_{ij}=\phi_{ij}+T$. Note that $t_i<T$ for all $1\leq i\leq k$, then
\begin{equation}\label{eq:t_sum}
t_1+t_2+\dots+t_k=zT\quad z\in\mathbb{N}, (0<z<k)
\end{equation}
Since $t_i+r_i=T$ then
\begin{equation}\label{eq:all_sum}
\left(t_1+r_1\right)+\left(t_2+r_2\right)+\dots+\left(t_k+r_k\right)=kT
\end{equation}
The difference between (\ref{eq:all_sum}) and (\ref{eq:t_sum}) is
\begin{equation}
r_1+r_2+\dots+r_k=(k-z)T.
\end{equation}
%
\end{proof}

Note that $k$ is not necessarily an even number, but in the simplified model $t_1+t_2+\dots+t_k$ is a multiple of $T$ only if $k$ is even. This is consistent with the bipartition requirement.

Let us now consider the case of neighbors flying in opposite directions. In this case, the communication graph must be bipartite.

\begin{theorem}\label{thm:cycle_synchro_general2}
Let $P_1, P_2, \dots, P_{2k}$ be $2k$ trajectories that form a cycle in the communication graph. Let $T$ be the system period. If the neighbors are flying in opposite directions the cycle can be synchronized if and only if exists $z\in\mathbb{N}$ such that:
\begin{align*}
 t_1+r_2+t_3+\dots+t_{2k-1}+r_{2k}&=zT \text{\quad and}\\
 r_1+t_2+r_3+\dots+r_{2k-1}+t_{2k}&=(2k-z)T
\end{align*}

\end{theorem}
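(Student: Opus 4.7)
The plan is to adapt the recursive computation of starting positions from the proof of Theorem \ref{thm:cycle_synchro_general1}, this time alternating the direction of flight around the cycle. Since neighbors must fly in opposite directions, the bipartite structure forces the cycle length to be even; without loss of generality I would assume $V_1, V_3, \ldots, V_{2k-1}$ fly CCW and $V_2, V_4, \ldots, V_{2k}$ fly CW, and fix $\alpha_1 = \phi_{1,2k}$ as the starting position of $V_1$.

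The key geometric observation is that the labels $t_i$ (inside arc) and $r_i$ (outside arc) are induced by a consistent planar embedding of the cycle $P_1, P_2, \ldots, P_{2k}$, so an AR flying CCW traverses from $\phi_{i,i-1}$ to $\phi_{i,i+1}$ along the inside arc in time $t_i$, whereas the same endpoint-to-endpoint trip flown CW takes the outside arc in time $r_i$. Consequently, between consecutive link positions along the cycle, odd-indexed ARs contribute $t_i$ to the elapsed time and even-indexed ARs contribute $r_i$.

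Given this, I would unroll the synchronization constraints using the convention $\phi_{ij}\pm t$ introduced in the excerpt. From $\alpha_1 = \phi_{1,2k}$, $V_1$ reaches $\phi_{1,2}$ at time $t_1$, so synchronization with $V_2$ (CW) forces $\alpha_2 = \phi_{2,1} + t_1$; then $V_2$ reaches $\phi_{2,3}$ at time $t_1 + r_2$, forcing $\alpha_3 = \phi_{3,2} - (t_1 + r_2)$ (since $V_3$ is CCW); and so on, with the signs alternating according to the direction of each $V_i$. When $V_{2k}$ arrives at $\phi_{2k,1}$, the total elapsed time is
\[
S \;=\; t_1 + r_2 + t_3 + r_4 + \cdots + t_{2k-1} + r_{2k}.
\]
Synchronization closes the cycle iff $V_1$ is at $\phi_{1,2k}$ at time $S$; since $V_1$ started there with period $T$, this happens iff $S = zT$ for some $z\in\mathbb{N}$. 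The complementary equation then follows immediately by summing $t_i + r_i = T$ over all $2k$ indices and subtracting, giving $2kT - zT = (2k-z)T$.

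The main obstacle I expect is the rigorous justification of the geometric claim that a CCW and a CW AR, each going between consecutive link positions on its trajectory, take complementary arcs and hence times $t_i$ and $r_i$ respectively. This rests on a consistent orientation of the planar embedding of the cycle and should be stated as a mild assumption on how $t_i$ and $r_i$ are labeled; once granted, the remainder of the argument is a direct analogue of the bookkeeping used for Theorem \ref{thm:cycle_synchro_general1}.
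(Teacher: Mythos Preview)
Your proposal is correct and follows essentially the same argument as the paper: fix $\alpha_1=\phi_{1,2k}$ with $V_1$ flying CCW, unroll the synchronization constraints so that the accumulated time alternates $t_1,r_2,t_3,\ldots$, close the cycle to obtain $t_1+r_2+\cdots+t_{2k-1}+r_{2k}\equiv 0\pmod T$, and derive the second equation from $\sum_i(t_i+r_i)=2kT$. Your explicit discussion of why odd-indexed ARs contribute $t_i$ while even-indexed ARs contribute $r_i$ is a welcome clarification that the paper leaves implicit, but it does not change the method.
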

\begin{proof}
In the equations we use $\alpha_i$ to denote the starting position of AR $i$ in its trajectory.
Without loss of generality, suppose that $\alpha_1=\phi_{1k}$ and that AR $1$ is flying CCW. Computing $\alpha_2,\alpha_3,\dots,\alpha_k$ we obtain the following result:
\begin{align}
\alpha_1 &= \phi_{1k} \nonumber \\
\alpha_2 &= \phi_{21}+t_1 \nonumber \\
\alpha_3 &= \phi_{32}-t_1-r_2 \nonumber \\
\vdotswithin{\alpha_{2k-1}}& \vdotswithin{= \phi_{32}-t_1-r_2} \nonumber \\
\alpha_{2k-1} &= \phi_{2k-1,2k-2}-t_1-r_2-\dots-r_{2k-2} \nonumber \\
\alpha_{2k} &= \phi_{2k,2k-1}+t_1+r_2+\dots+r_{2k-2}+t_{2k-1} \nonumber 
\end{align}
Obviously, having that $i$ is synchronized with $i+1$ for $1\leq i <2k$ implies that the cycle can be synchronized if and only if $\alpha_1 = \phi_{1k}-t_1-r_2-\dots-t_{2k-1}-r_{2k}=\phi_{1k}$, from here we conclude that $t_1+r_2+\dots+t_{2k-1}+r_{2k}$ is a multiple of $T$ because $\phi_{ij}=\phi_{ij}+T$. Note that $t_i<T$ for all $1\leq i\leq 2k$, then
\begin{equation}\label{eq:t_sum_op}
t_1+r_2+\dots+t_{2k-1}+r_{2k}=zT\quad z\in\mathbb{N}, (0<z<2k)
\end{equation}
Since $t_i+r_i=T$ then
\begin{equation}\label{eq:all_sum_op}
\left(t_1+r_1\right)+\left(t_2+r_2\right)+\dots+\left(t_{2k}+r_{2k}\right)=2kT
\end{equation}
The difference between (\ref{eq:all_sum_op}) and (\ref{eq:t_sum_op}) is
\begin{equation}
r_1+t_2+\dots+r_{2k-1}+t_{2k}=(2k-z)T.
\end{equation}

%
An analogous result is obtained if AR $1$ is flying CW.
\end{proof}


\subsection{Application to a case study}
In this subsection we present an example of how to apply our strategy (using opposite
directions) to a communication graph that is neither a tree nor an odd cycle. By
decomposing the communication graph into a collection of cycles and trees, the
approach used here can be similarly adapted to other complex configurations.
Figure~\ref{fig:video_sample} shows the trajectories and the chosen edges of our case
study. Notice that we have irregular trajectories of different lengths, so, using the
same speed for all the ARs is not possible for a fully synchronized system. We may
consider assigning different and constant speeds for each trajectory such that all the
ARs take the same time in a tour, but this does not guarantee synchronization either
because the communication graph has two cycles, $2,7,8,5$ and $2,5,8,6,4,3$. We can
solve the problem by a careful application of
Theorem~\ref{thm:cycle_synchro_general2}. This requires that we extend our notation to
cover trajectory sections.

\begin{figure}[h]
\centering
\begin{subfigure}[b]{.4\textwidth}
\centering
\includegraphics[scale=.33]{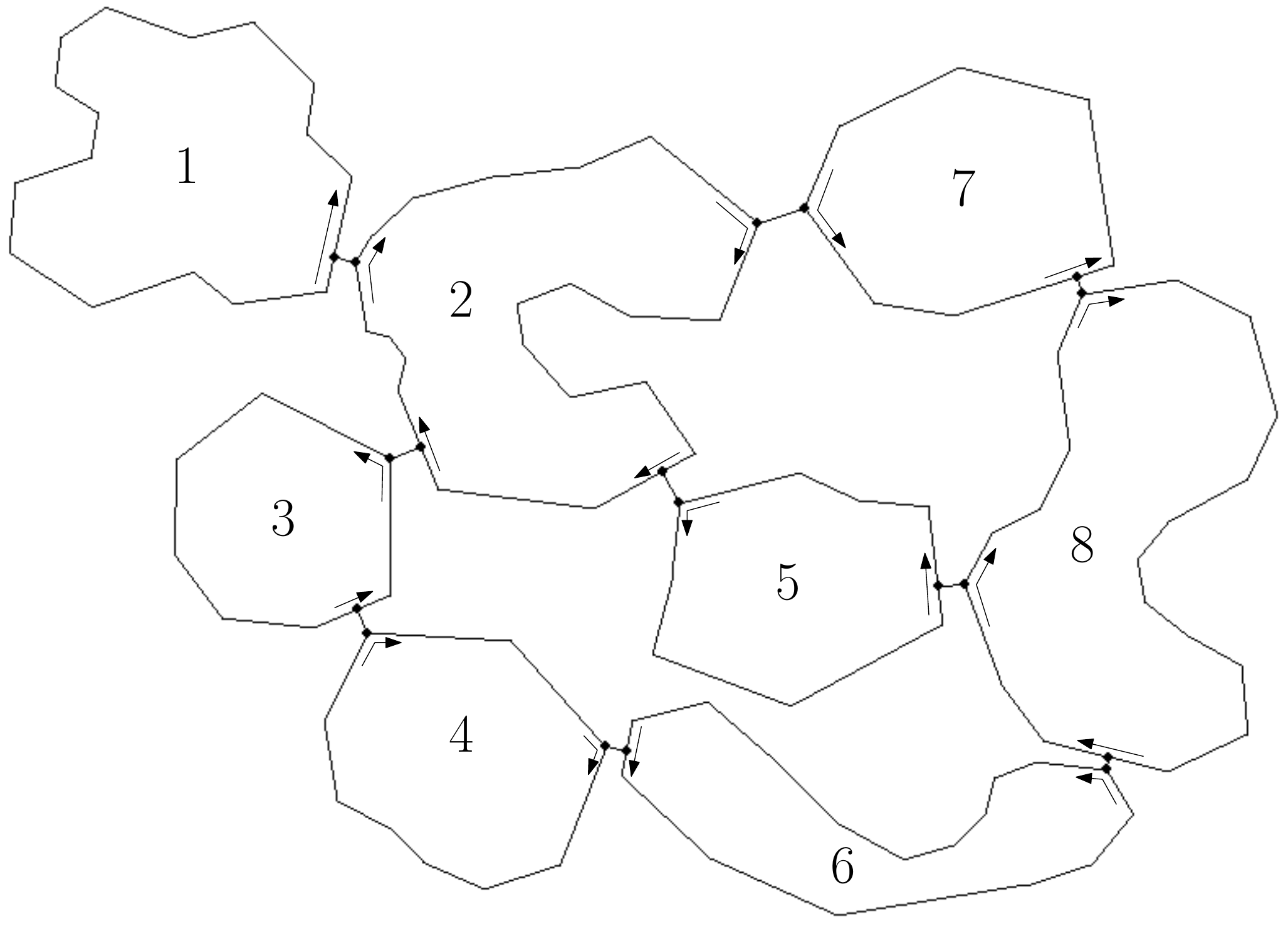}
\caption{}
\label{fig:video_sample}
\end{subfigure}\qquad\qquad%
\begin{subfigure}[b]{.45\textwidth}
\centering
\includegraphics[scale=.45, page=2]{general_sample.pdf}
\caption{}
\label{fig:section_times}
\end{subfigure}
\caption{(a) Example of a general scenario. (b) Time spent while traveling various trajectory sections.}
\end{figure}

We denote by $\ptime{i}{j}{l}$ the time spent to travel in trajectory $i$ from the
link position shared with trajectory $j$ to the link position shared with trajectory
$l$ (following the assigned direction), see the Figure~\ref{fig:section_times}. Note
that $\ptime{i}{j}{l}$ can be different from $\ptime{i}{l}{j}$, and
$\ptime{i}{j}{l}+\ptime{i}{l}{j}=T$ where $T$ is the period (also
$\ptime{i}{j}{l}+\ptime{i}{l}{k}=\ptime{i}{j}{k}$).

Applying Theorem~\ref{thm:cycle_synchro_general2} to cycle $2,7,8,5$ we obtain the following equations:
\begin{align}
\ptime{2}{7}{5}+\ptime{7}{8}{2}+\ptime{8}{5}{7}+\ptime{5}{2}{8}&=z_1T\label{eq:eq1}\\
{\left(\ptime{2}{5}{3}+\ptime{2}{3}{7}\right)}+\ptime{7}{2}{8}+{\left(\ptime{8}{7}{6}+\ptime{8}{6}{5}\right)}+\ptime{5}{8}{2}&=(4-z_1)T\label{eq:eq2}
\end{align}
with $z_1\in\mathbb{N},\;0<z_1<4$.

From cycle $2,5,8,6,4,3$ we obtain:
\begin{align}
\ptime{2}{5}{3}+\ptime{5}{8}{2}+\ptime{8}{6}{5}+\ptime{6}{4}{8}+\ptime{4}{3}{6}+\ptime{3}{2}{4}&=z_2T \label{eq:eq3}\\
\begin{array}{r}
{\left(\ptime{2}{3}{7}+\ptime{2}{7}{5}\right)}+\ptime{5}{2}{8}+{\left(\ptime{8}{5}{7}+\ptime{8}{7}{6}\right)}\\+\ptime{6}{8}{4}+\ptime{4}{6}{3}+\ptime{3}{4}{2}
\end{array}&=(6-z_2)T\label{eq:eq4}
\end{align}
with $z_2\in\mathbb{N},\;0<z_2<6$.

Also we have that:
\begin{equation}
\begin{array}{rcr}
\ptime{2}{7}{5}+\ptime{2}{5}{3}+\ptime{2}{3}{7}=T &\qquad& \ptime{5}{2}{8}+\ptime{5}{8}{2}=T\\
\ptime{8}{5}{7}+\ptime{8}{7}{6}+\ptime{8}{6}{5}=T &\qquad& \ptime{5}{8}{2}+\ptime{5}{2}{8}=T\\
\ptime{3}{2}{4}+\ptime{3}{4}{2}=T &\qquad& \ptime{4}{3}{6}+\ptime{4}{6}{3}=T\\
\ptime{6}{8}{4}+\ptime{6}{4}{8}=T&\qquad&
\end{array}\label{eq:period_constraint}
\end{equation}

We proceed by assigning values to each $\ptime{i}{j}{k}$ fulfilling the constraints
(\ref{eq:eq1}), (\ref{eq:eq2}), (\ref{eq:eq3}), (\ref{eq:eq4}) and (\ref{eq:period_constraint}).
Also, we must try to assign time values so as to keep small differences in speed for
various trajectory sections.
By doing this, 
we reduce the accelerations and decelerations of the ARs in the mission and keep
flight behavior realistic. If we do not keep this in mind while working with a group of
heterogeneous ARs, it may happen that some time during the mission an AR must travel a
subtrajectory with a speed that can not be reached.

Using a heuristic algorithm we obtain the following feasible assignation:
$$\begin{array}{c}
\ptime{2}{7}{5}=0.40T\qquad\ptime{2}{5}{3}=0.18T\qquad\ptime{2}{3}{7}=0.42T\\
\ptime{8}{5}{7}=0.28T\qquad\ptime{8}{7}{6}=0.50T\qquad\ptime{8}{6}{5}=0.22T\\
\ptime{3}{4}{2}=0.40T\qquad\ptime{3}{2}{4}=0.60T\\
\ptime{4}{3}{6}=0.30T\qquad\ptime{4}{6}{3}=0.70T\\
\ptime{5}{8}{2}=0.34T\qquad\ptime{5}{2}{8}=0.66T\\
\ptime{6}{8}{4}=0.64T\qquad\ptime{6}{4}{8}=0.36T\\
\ptime{7}{2}{8}=0.34T\qquad\ptime{7}{8}{2}=0.66T
\end{array}$$
After that, fixing a value $T$ such that every AR can reach the associated speed to each subtrajectory (length of the subtrajectory$/$assigned time to the subtrajectory). Having the times (and the speeds) at each subtrajectory, setting an initial position for an AR, we can constructively compute the initial position for every AR in the communication graph.

We implemented a simulation of an example that leads us to the communication graph of the Figure~\ref{fig:video_sample}, a video is available at {\small\url{https://www.youtube.com/watch?v=T0V6tO80HOI}} illustrating all the phases of the algorithm.


\section{Simulation and computational results}\label{simul}
Consider the cooperative surveillance of an area by means of a team of small fixed wing aerial robots. The area can be divided into a grid with cells of 300m$\times$300m. Each cell is assigned to a member of the team. The robots are equipped with an on-board camera Panasonic DMC-GH2. The chosen focal length of the camera is 14mm, thus the field of view is $53.13^{\circ} \times 36.87^{\circ}$. The robots are programmed to fly at a constant altitude of 90m (to prevent the obstruction of commercial air traffic) and at a constant speed of 12m/s. From this altitude the covered area by the camera is approximately 45m$\times$30m and objects are distinguishable over 0.06m 
(the targets to detect should be grater than 0.06m). To cover a 300m$\times$300m cell we can use a \emph{back and forth} closed trajectory as shown in Figure \ref{fig:covering_area}, note that the farthest points to the trajectory are at a distance of 20.5m, so in these critic points we have a margin of deviation on the route of $\pm 2$m. Thus, every point can be watched by the camera in some instant during the tour. (A\~nadir aqu\'i, si consideran necesario, algo sobre ``kinematic contraints''. En esta trayectoria todos los cambios de direcci\'on est\'an sobre arcos de circunferencia de al menos 16.67m de radio como muestra la figura, por tanto los UAV deben ser capaces de girar sobre un c\'irculo de 16.67m de radio a 12m/s para que sea una trayectoria realmente factible)

\begin{figure}[!h]
\centering
\includegraphics[page=3, width=\textwidth]{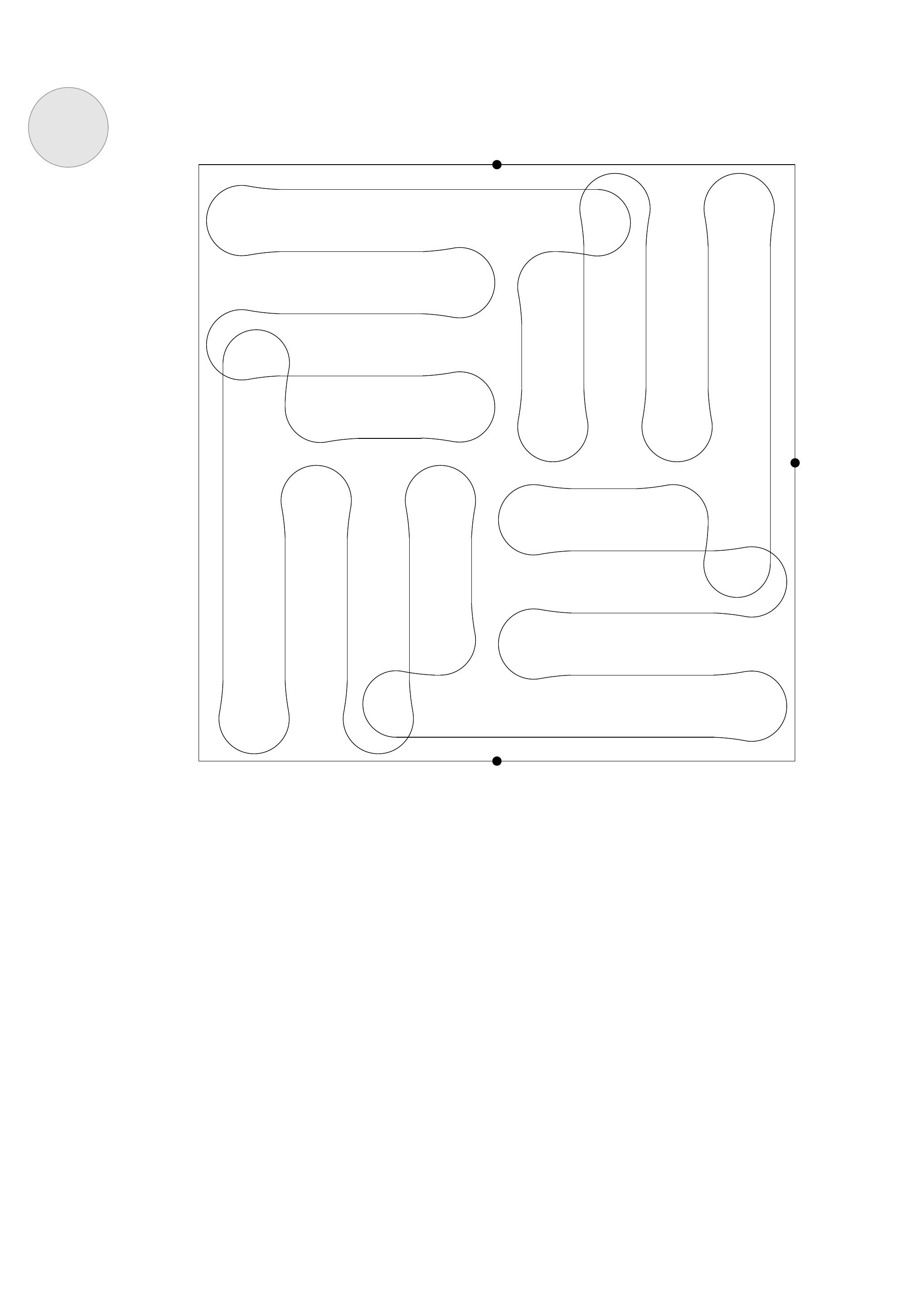}
\caption{Two neighboring cells and two synchronized UAVs. Using red stroke feasible trajectories to cover the cells. The UAV in the left cell is flying in CCW direction while the UAV in the right cell is flying in CW direction. The communication region between them is shaded in gray and has a length of 100m. The UAVs have been magnified to make them look better.}
\label{fig:covering_area}
\end{figure}

The robots have very constrained communication due to security reasons. Then, the only possibility to communicate the information is to synchronize the robots in such a way that two robots meet in adjacent cells. The objective is to synchronize the system in such a way that all the robots transmit the collected information in spite of robot failures and leaving for refueling as mentioned in Section \ref{sec:problem}. In our scenario the link position between two adjacent trajectories is at the middle point of the common side between the corresponding cells. The distance between two neighboring trajectories in the link position is 25m, close enough to share information using the on-board wifi equipment and sufficiently far to avoid collisions, see Figure \ref{fig:covering_area}. The proposed trajectory is symmetrical, so an aerial robot makes the same route traveling from any link position to the next link position. Note that a grid using these trajectories in the cells is equivalent to a grid using circular trajectories, see Figure \ref{fig:equiv_our_circular}. Therefore, if a team of UAVs is monitoring a region divided into a grid, one robot per cell, such that every pair of neighbors is flying in opposite directions at the same constant altitude and the same constant speed using our back and forth trajectories, then, using by Theorem \ref{thm:opposite_cycles} the system can be synchronized. The communication region between two neighboring UAVs has a length of 100m (see Figure \ref{fig:covering_area}), so flying at 12m/s they have 8.33s to share information with relative speed between them of 0m/s. The Figure \ref{fig:surveillance_illustration} illustrates this scenario. 

\begin{figure}[!h]
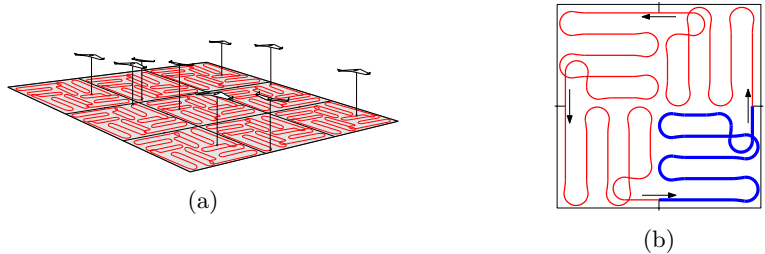

\centering
\begin{subfigure}{\textwidth/2}
\centering
\includegraphics[page=5, scale=.4]{covering_areas_fixedwing_UAV.pdf}
\caption{}
\end{subfigure}%
\begin{subfigure}{\textwidth/2}
\centering 
\includegraphics[page=6, scale=.4]{covering_areas_fixedwing_UAV.pdf}
\caption{}
\end{subfigure}
\caption{Equivalence between a grid using our back and forth trajectories and a grid using circular trajectories. Note that a segment of trajectory between two consecutive link points in (a) corresponds to a segment between two consecutive points in (b). }
\label{fig:equiv_our_circular}
\end{figure}

\begin{figure}[!h]
\centering
\includegraphics[page=4, width=\textwidth]{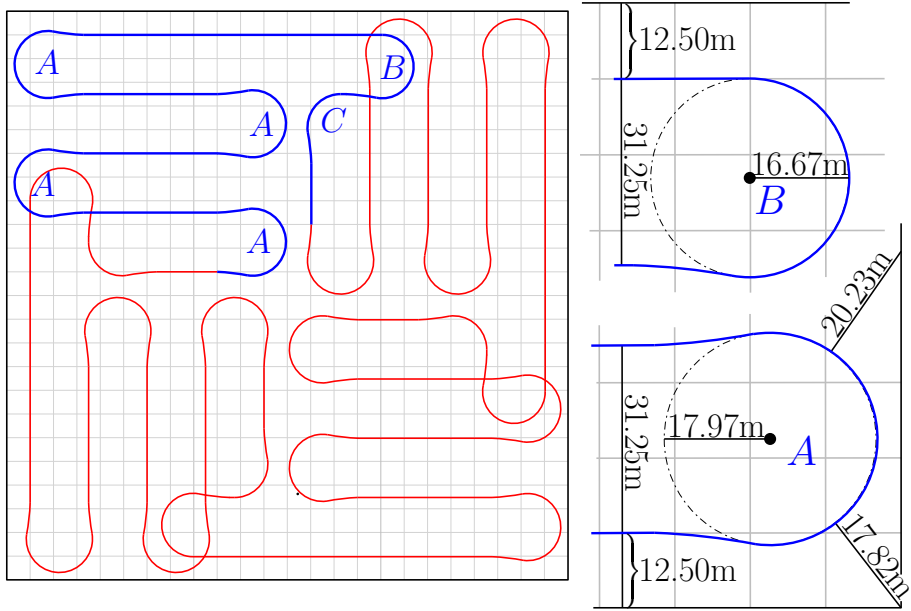}
\caption{A synchronized team of UAVs patrolling a $3\times 3$ grid region where the dimensions of the cells are 300m$\times$300m. Each UAV is flying in its cell over a trajectory as shown in the Figure \ref{fig:covering_area}. They are flying at the same constant altitude of 90m with constant speed of 12m/s. The UAVs have been magnified to make them look better.}
\label{fig:surveillance_illustration}
\end{figure}

The length of the proposed covering trajectories is approximately 3609.79m, thus an UAV spent 300.82s to make a tour in its cell. Simulations were performed to validate the proposed approach. We used the model with circular trajectories and assuming that the period of an AR is 300s in order to simplify the generation of test cases. We ran each simulation for 15\,000s.
We considered two measures to evaluate the performance of the system: the \emph{broadcast time (BT)}, that is, the time it takes for a message issued by a robot to reach the full team, and the \emph{abandoned time (AT)}, defined as the interval of time in which a trajectory is not attended by any AR.
Table~\ref{tab:simula_results} shows the results of the experiments. The first three cases correspond to communication graphs with a grid structure of size 3$\times$3, the following three correspond to a grid structure of size 5$\times$3 and The other cases correspond to graphs randomly generated.
%
%
%
A random graph is generated constructively, suppose that we have a connected graph formed by $m$ circles, to increase the graph we select a random circle $c$ and a random ray $r$ from the center of $c$, then the new circle is placed with center on $r$ such that it is disjoint with the other circles and keeping the connection in the graph. Thus, from a graph of $m$ nodes we have constructed a graph with $m+1$ nodes. With this idea starting with one circle we can construct graphs of any numbers of nodes.
The first column in the Table~\ref{tab:simula_results} shows the initial number of ARs; the second column, the number of fallen ARs (selected randomly); the third, the broadcast time; and the last shows the maximum abandoned times during the simulation. All the results correspond to the average of 10 simulations with the same parameters.
The code of the simulation was written using the \emph{Python} language.

\begin{table}[!h]
\centering
\begin{tabular}{c|ccrr}
&$\mathbf{N^\circ\;ARs}$&$\mathbf{N^\circ\;F.\;ARs}$&\textbf{Avg. BT(s)}&\textbf{Max. AT(s)}\\
\hline
\textbf{Grid $3\times 3$} & 9 & 0 & 348.71 & 0.00\\
\textbf{Grid $3\times 3$} & 9 & 2 & 394.46 & 157.87\\
\textbf{Grid $3\times 3$} & 9 & 4 & 482.33 & 277.50\\
\hline
\textbf{Grid $5\times 3$} & 15 & 0 & 543.30 & 0.00\\
\textbf{Grid $5\times 3$} & 15 & 3 & 595.50 & 260.63\\
\textbf{Grid $5\times 3$} & 15 & 7 & 658.43 & 607.50\\
\hline
\textbf{Random} & 10 & 0 & 786.34 & 0.00\\
\textbf{Random} & 10 & 2 & 812.48 & 327.00\\
\textbf{Random} & 10 & 5 & 1027.20 & 777.75\\
\hline
\textbf{Random} & 15 & 0 & 890.63 & 0.00\\
\textbf{Random} & 15 & 3 & 1099.16 & 285.75\\
\textbf{Random} & 15 & 7 & 1306.05 & 869.63\\
\end{tabular}

\caption{Simulations results using a circular model.}
\label{tab:simula_results}
\end{table}

Table~\ref{tab:simula_results} shows that our method is robust. The broadcast time does not grow significantly even if close to 50\% of the robots fail. Thus, the system is fault-tolerant. Also, it is worth noting the importance of a large number of communication links in the graph. In grid configurations, all the cycles meet the hypothesis of Theorem~\ref{thm:opposite_cycles} and we can use all the links in the communication graph. However, in random graphs the probability to generate cycles fulfilling such hypothesis is low. In this case, the algorithm computes the feasible maximal subgraph losing many communication links.
For random graphs, the abandoned time measure shows a decrease in performance. Finding strategies to improve the results gives us a promising line of research.
From here the importance to continue this research line to find strategies to tackle these cases.

\section{Starvation avoidance}\label{sec:starvation}
In this section we introduce a new concept corresponding to a phenomenon that may arise when one or more robots leave the system. Consider the communication graphs in Figure~\ref{fig:starvation_samples}. If the white ARs fail or otherwise leave the system, the two surviving ARs (shown in solid) fail to meet one another resulting in a permanent loss of synchronization. We say that an AR \emph{starves} if independent of how much longer it remains in flight, it permanently fails to encounter other ARs at any communication link. If some ARs leave the system and the remaining ARs are starving we say that the system \emph{falls into starvation}.

\begin{figure}[ht]
\centering
\begin{subfigure}{\columnwidth/2}
\centering
\includegraphics[scale=1.2]{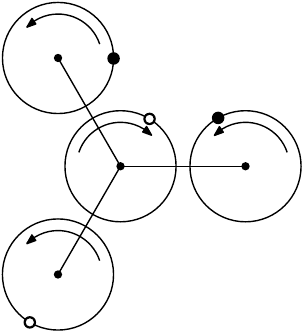}

\caption{}
\label{fig:starv_samp1}
\end{subfigure}%
\begin{subfigure}{\columnwidth/2}
\centering
\includegraphics[scale=1.2,page=2]{starvation.pdf}

\caption{}
\label{fig:starv_samp2}
\end{subfigure}%

    \caption{Examples of systems in starvation. If the white robots fail, the solid ones permanently fail to synchronize.}
        \label{fig:starvation_samples}
\end{figure}
Another example is illustrated in Figure~\ref{fig:starve_abandoned_trajectory}. By removing the white ARs the system falls into starvation and, if later,  ARs $a$ and $b$ also fail, then the trajectories $P_1$, $P_2$ and $P_3$ are abandoned forever.

\begin{figure}
\centering
\includegraphics[scale=1.2, page=4]{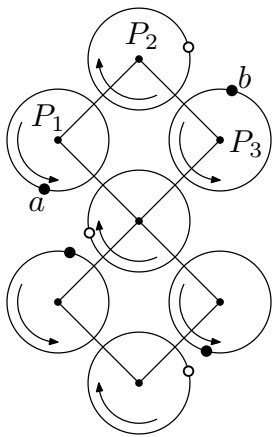}

\caption{Starvation and abandoned trajectories. The system begins to move at the drawn points. }
\label{fig:starve_abandoned_trajectory}
\end{figure}

Let $G$ be a (bipartite) communication graph of size $n$. For the sake of robustness, we are
interested in preserving the following properties in the presence of AR failures:
(i) all trajectories are covered
(ii) a broadcast sent from any AR reaches all surviving ARs (i.e., there are no starving ARs).

One idea to break starvation is to abandon the deterministic policy of switching to
a neighboring trajectory whenever the AR responsible for that trajectory is not
found. Instead, the AR decides with probability $p$ to switch to
the neighboring trajectory or not. Another strategy is to only allow migration
on edges of a subgraph of the communication graph, ideally one that is ``resistant" to starvation. An ideal choice is a Hamiltonian trajectory (at least when it can be found) as starvation is not possible in this case. In other words, no ARs can possibly starve in a chain provided there are at least two active ARs. A different topology with good properties is that of a spanning tree. For instance, it is easy to see that by permanently anchoring an AR in a leaf node one can guarantee that there are no starving ARs. In general, a graph has multiple spanning trees. One that is easy to compute is the depth-first-search (DFS) tree.

\subsection{Testing strategies to avoid starvation}
In this subsection we present some computational results to compare the aforementioned strategies.
The period of an AR is 80s (this is the time spent by an UAV to make a tour in a circular trajectory of radius 150m flying at a speed of 11.78m/s) and we ran each simulation for 4\,000s.
We use the following strategies:
\begin{itemize}
\item \textbf{alw} (when an AR arrives to a link position and its neighbor is not there then AR switches to its neighbor's trajectory).
\item \textbf{rand} (when an AR arrives to a link position and its neighbor is not there then AR switches trajectories with probability 1/2).
\item \textbf{dfs} (when an AR arrives to a link position and its neighbor is not there then AR switches trajectories if this edge is in the DFS tree with root a predefined node in the graph. In the case of grids the resulting DFS from the top-left vertex is a Hamiltonian-trajectory).
\end{itemize}
The measures to compare the strategies are the following:
\begin{itemize}
\item \textbf{Max. ST:} maximum interval of time in which a robot is starving.
\item \textbf{Avg. CT:} average of completed tours on each trajectory, this is an indicator of the performance in each trajectory.
\item \textbf{Max. AT:} the maximum interval of time in which a trajectory is not attended by any AR.
\item \textbf{Avg. BT:} average of the broadcast time.
\end{itemize}

We start testing these strategies with grid graphs and random graphs removing randomly about half of the robots. Observe that in Figures~\ref{fig:starvation_samples} and \ref{fig:starve_abandoned_trajectory} this number of failures is needed in order to produce starvation.

\begin{table}
\centering
\begin{tabular}{r|rrrr}
			& \textbf{Max. ST(s)} & \textbf{Avg. CT} & \textbf{Max. AT(s)} & \textbf{Avg. BT(s)}\\
			\hline
\textbf{alw} 	& 115	& 7.44 	& 64 	& 183.16\\
\textbf{dfs}	& 195	& 9.44	& 224 	& 156.80\\
\textbf{rand} 	& 175	& 11.11	& 504	& 135.46\\

\end{tabular} \\
\caption{Grid $3\times 3$, removing 4 ARs randomly.} %
\label{tab:grid_3x3}
\end{table}
\begin{table}
\centering
\begin{tabular}{r|rrrr}
			& \textbf{Max. ST(s)} & \textbf{Avg. CT} & \textbf{Max. AT(s)} & \textbf{Avg. BT(s)}\\
									\hline
\textbf{alw} 	& 115	& 6.93 	& 124 	& 186.24\\
\textbf{dfs}	& 115	& 9.87	& 144 	& 178.98\\
\textbf{rand} 	& 234	& 11.13	& 724	& 216.41\\

\end{tabular}
\caption{Grid $5\times 3$, removing 7 ARs randomly.}
\label{tab:grid_5x3}
\end{table}

The results are shown in Tables \ref{tab:grid_3x3}, \ref{tab:grid_5x3} and \ref{tab:rand}. Note that we obtain the same results using \textbf{alw} and \textbf{dfs} in the statistics of the random graphs. The reason is that the bipartite subgraph fulfilling Theorem~\ref{thm:cycle_synchro_general2} is a tree, so, all the edges of the communication graph are in the DFS-tree. In neither of these cases the system falls into starvation (apparently, starvation is improbable when removing ARs randomly). Also, our initial strategy (\textbf{alw}) obtains better results than the others in these random cases, showing robustness (with respect to starvation) in most cases.

\begin{figure}[ht]

\begin{subfigure}{.5\textwidth}
\centering
\includegraphics[scale=0.33]{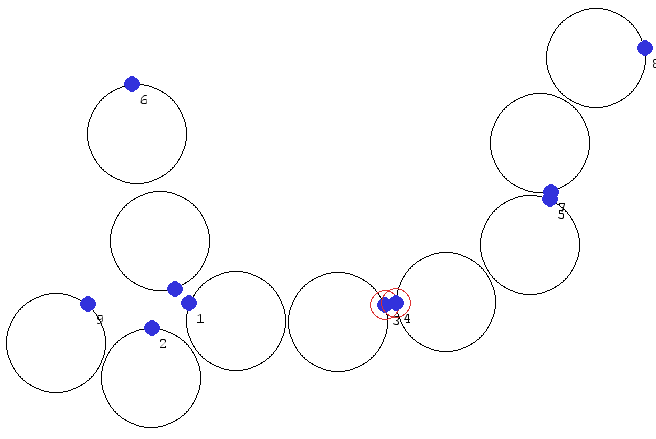}
\caption{}
\label{fig:random_10_1}
\end{subfigure} %
\begin{subfigure}{.5\textwidth}
\centering
\includegraphics[scale=0.33]{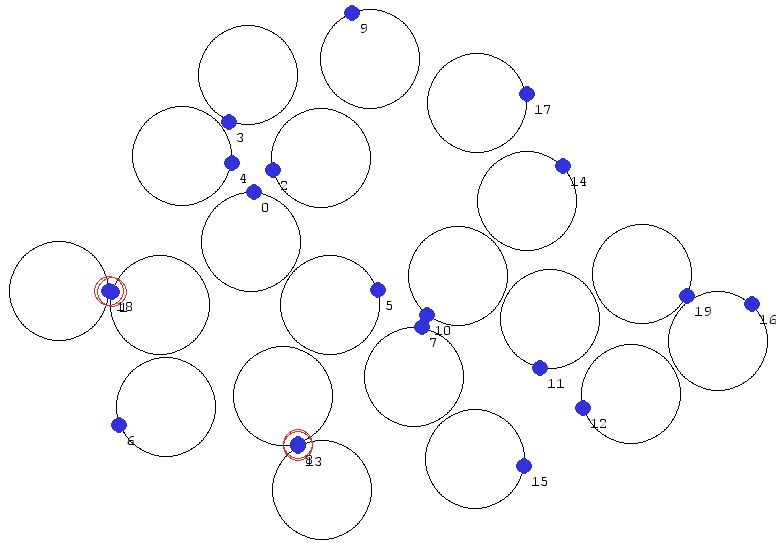}
\caption{}

\label{fig:random_20_2}
\end{subfigure}

\caption{Random graphs, the bipartite subgraphs fulfilling Theorem~\ref{thm:cycle_synchro_general2} are trees.}
\end{figure}

\begin{table}[h]
\centering
\begin{tabular}{r|rrrr}
			& \textbf{Max. ST(s)} & \textbf{Avg. CT} & \textbf{Max. AT(s)} & \textbf{Avg. BT(s)}\\
			\hline
\textbf{(alw/dfs)$^{1}$} 		& 224	& 11.10 	& 118 	& 238.36\\
\textbf{rand$^{1}$} 			& 603	& 10.10		& 1398	& 422.18\\
\hline
\textbf{(alw/dfs)$^{2}$}		& 535	& 9.35		& 239	& 518.20\\
\textbf{rand$^{2}$}			& 900	& 11.45		& 1144	& 789.82\\
\end{tabular}
\caption{$(1)$-Figure \ref{fig:random_10_1} removing 5 ARs randomly and $(2)$-Figure \ref{fig:random_20_2} removing 10 ARs randomly.}
\label{tab:rand}
\end{table}

Let us now consider other instances of starvation and discuss how the above strategies behave in these cases.

\begin{figure}[ht]
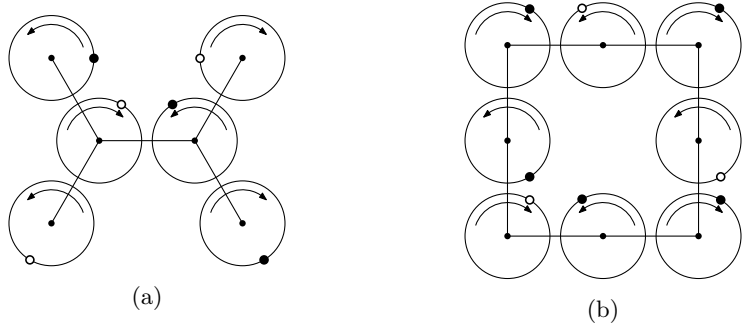

\centering
\begin{subfigure}{.5\textwidth}
\centering
\includegraphics[scale=1, page=3]{starvation.pdf}
\caption{}
\label{fig:starv_samp3}
\end{subfigure}%
\begin{subfigure}{.5\textwidth}
\centering
\includegraphics[scale=1, page=5]{starvation.pdf}
\caption{}
\label{fig:starv_samp4}
\end{subfigure}
\caption{Starvation samples. If the white nodes are removed the system falls into starvation.}
\end{figure}

Figures \ref{fig:starv_samp3} and \ref{fig:starv_samp4} show two other cases when if the white ARs leave the system, then the team falls into starvation. Table~\ref{tab:starv_trees} shows the behavior of the strategies in the graphs of the Figures \ref{fig:starv_samp1}, \ref{fig:starv_samp2} and \ref{fig:starv_samp3} after the white ARs leave the system. The strategies \textbf{alw} and \textbf{dfs} have the same results because these graph are trees. In these cases the system falls into starvation. Using the probabilistic strategy, we obtain better results. Note that using the \textbf{alw} or \textbf{dfs} the trajectories are visited more frequently, but the broadcast time is infinity (that is, starvation occurs).

\begin{table}[ht]
\centering
\begin{tabular}{r|rrrr}
			& \textbf{Max. ST(s)} & \textbf{Avg. CT} & \textbf{Max. AT(s)} & \textbf{Avg. BT(s)}\\
\hline
\textbf{(alw/dfs)$^1$} 		& 4000	& 6.25 	& 64 	& $\infty$\\
\textbf{rand$^1$} 			& 422	& 12.50	& 571	& 97.44\\
\hline
\textbf{(alw/dfs)$^2$}		& 4000	& 8.00	& 144	& $\infty$\\
\textbf{rand$^2$}			& 535	& 13.60	& 1084	& 138.76\\
\hline
\textbf{(alw/dfs)$^3$}		& 4000	& 8.33	& 65	& $\infty$\\
\textbf{rand$^3$}			& 395	& 11.00 & 844	& 163.58
\end{tabular}
\caption{$(1)$-Figure \ref{fig:starv_samp1}, $(2)$-Figure \ref{fig:starv_samp2} and $(3)$-Figure \ref{fig:starv_samp3}}.
\label{tab:starv_trees}
\end{table}

The following tables show the results in the example graphs with cycles (Figures~\ref{fig:starve_abandoned_trajectory} and \ref{fig:starv_samp4}), and then using the \textbf{dfs} strategy we obtain some interesting results. The maximum starvation time is lower in both cases when using \textbf{dfs}. The number of completed tours is higher using \textbf{dfs} in the first case (Table~\ref{tab:starv_tab1}), while \textbf{rand} performs better in the second case (Table~\ref{tab:starv_tab2}). Finally, with respect to broadcast time, \textbf{dfs} shows better performance  in the second case, while \textbf{rand} is a better choice for the first case.

\begin{table}
\centering
\begin{tabular}{r|rrrr}
			& \textbf{Max. ST(s)} & \textbf{Avg. CT} & \textbf{Max. AT(s)} & \textbf{Avg. BT(s)}\\
\hline
\textbf{alw} 	& 4000	& 0.00 	& 24 	& $\infty$\\
\textbf{dfs}	& 155	& 10.14	& 144 	& 158.77\\
\textbf{rand} 	& 275	& 8.71	& 384	& 147.14\\
\end{tabular}
\caption{Results corresponding to Figure~\ref{fig:starve_abandoned_trajectory}.}
\label{tab:starv_tab1}
\end{table} %
\begin{table}
\centering
\begin{tabular}{r|rrrr}
			& \textbf{Max. ST(s)} & \textbf{Avg. CT} & \textbf{Max. AT(s)} & \textbf{Avg. BT(s)}\\
\hline
\textbf{alw} 	& 4000	& 0.00 	& 24 	& $\infty$\\
\textbf{dfs}	& 235	& 9.38	& 184 	& 177.23\\
\textbf{rand} 	& 415	& 13.38	& 564	& 223.90\\
\end{tabular}
\caption{Results corresponding to Figure~\ref{fig:starv_samp4}.}
\label{tab:starv_tab2}
\end{table}

Summarizing, our results suggest that if we use a strategy with a high rate of exchange between the trajectories, both the number of completed tours tends to decrease and the abandoned time tends to decrease. This property could be an important factor when deciding what strategy to use because in some scenarios it could be very important that the ARs make completed tours in the trajectories (for example in monitoring missions). Finally, it is worth noting that obtaining higher levels of communication in the team, that is, minimizing the broadcast time, depends on the topology of the graph as well as the selected strategy to use.

\section{Conclusions and Future Research}\label{sec:conclusion}


In this paper we introduce a strategy to synchronize a team of ARs and describe an approach to reestablish synchronization in the case when some ARs leave the system. We also introduce a new concept, that of starvation, in order to describe a phenomenon characterized by the permanent loss of synchronization for one or more active robots and propose various strategies to prevent it to ensure the fault-tolerance of the system.

 A solution in a simplified theoretical setting is presented first, and then adapted to a more realistic scenario.
 Our simulations suggest that this line of research is promising.  The work can be generalized to solve practical problems in several directions, including: (i) the trajectories can overlap and share multiple communication links that provide extra opportunities for information exchange; (ii) instead of failure, we can consider the inability of an agent to maintain its schedule along its trajectory; (iii) the implementation of real experiments in a multi-UAV testbed space.


\section*{Acknowledgments}
This work was initiated at the VI Spanish Workshop on Geometric Optimization, El Roc\'io, Huelva, Spain, held June 18-22, 2012. We thank the other participants of that workshop --C. Cort\'es, M. Fort, P. P\'erez-Lantero, J. Urrutia, I. Ventura and R. Zuazua-- for helpful discussions and contributing to a fun and creative atmosphere.

\section*{Funding}This work has been supported by the
ARCAS Project, funded by the European
Commission under the FP7 ICT Programme
(ICT-2011-287617) and the CLEAR Project
(DPI201 1-28937-C02-01), funded by the
Ministerio de Ciencia e Innovaci\'on of the
Spanish Government.

{\begin{minipage}[l]{0.2\textwidth}\includegraphics[scale=.15]{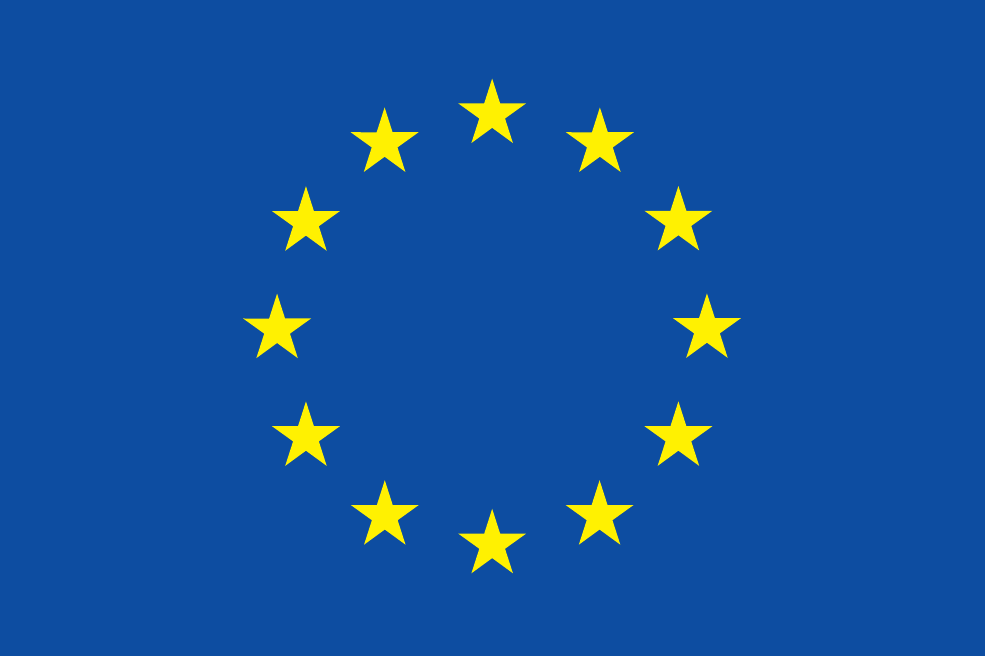} \end{minipage}%
	\begin{minipage}[l]{0.8\textwidth}
      This project has received funding from the European Union's Horizon 2020 research and innovation programme under the Marie Sk\l{}odowska-Curie grant agreement No 734922.
     \end{minipage}

\bibliography{bibliography}
\end{document}